\documentclass[11pt]{amsart}

\usepackage[margin=1in]{geometry}
\usepackage{amsmath,amssymb,amsthm,mathtools}
\usepackage{graphicx}
\usepackage{xcolor}
\usepackage{hyperref}
\usepackage[nameinlink,capitalize]{cleveref}

\usepackage{lipsum}
\usepackage{amsfonts}
\usepackage{graphicx}
\usepackage{epstopdf}
\ifpdf
  \DeclareGraphicsExtensions{.eps,.pdf,.png,.jpg}
\else
  \DeclareGraphicsExtensions{.eps}
\fi

\makeatletter
\renewcommand{\paragraph}{%
  \@startsection{paragraph}{4}%
  {\z@}{\z@}{-\fontdimen2\font}%
  {\normalfont\bfseries}}
\makeatother

\newtheorem{theorem}{Theorem}[section]

\theoremstyle{plain}
\newtheorem{lemma}[theorem]{Lemma}
\newtheorem{corollary}[theorem]{Corollary}
\newtheorem{proposition}[theorem]{Proposition}

\theoremstyle{definition}
\newtheorem{definition}[theorem]{Definition}
\newtheorem{assumption}[theorem]{Assumption}

\theoremstyle{remark}
\newtheorem{remark}[theorem]{Remark}

\crefname{claim}{claim}{claims}
\Crefname{claim}{Claim}{Claims}

\crefname{remark}{remark}{remarks}
\Crefname{remark}{Remark}{Remarks}

\crefname{hypothesis}{hypothesis}{hypotheses}
\Crefname{hypothesis}{Hypothesis}{Hypotheses}

\title[Recovery Theory for Projected Power Iterations]
{Recovery Theory for Projected Power Iterations in Permutation Synchronization}

\author{Vahan Huroyan}

\author{Gilad Lerman}

\thanks{Vahan Huroyan: Department of Mathematics and Statistics,
Saint Louis University, Saint Louis, Missouri, USA.
E-mail: \texttt{vahan.huroyan@slu.edu}.}

\thanks{Gilad Lerman: School of Mathematics,
University of Minnesota, Minneapolis, Minnesota, USA.
E-mail: \texttt{lerman@umn.edu}.}

\thanks{This work was supported by NSF award DMS-2427955.}

\usepackage{amsopn}

\makeatletter
\newcommand*{\addFileDependency}[1]{
  \typeout{(#1)}
  \@addtofilelist{#1}
  \IfFileExists{#1}{}{\typeout{No file #1.}}
}
\makeatother

\usepackage{amsmath,amssymb}
\usepackage{algorithm}
\usepackage{algpseudocode}
\usepackage{cleveref}
\usepackage[nocompress]{cite}

\newcommand{\vzero}{\boldsymbol{0}}
\newcommand{\vone}{\boldsymbol{1}}

\newcommand{\mA}{\boldsymbol{A}}
\newcommand{\mB}{\boldsymbol{B}}

\newcommand{\mD}{\boldsymbol{D}}
\newcommand{\mE}{\boldsymbol{E}}
\newcommand{\mG}{\boldsymbol{G}}
\newcommand{\mH}{\boldsymbol{H}}
\newcommand{\mI}{\boldsymbol{I}}
\newcommand{\mJ}{\boldsymbol{J}}
\newcommand{\mK}{\boldsymbol{K}}
\newcommand{\mL}{\boldsymbol{L}}
\newcommand{\mM}{\boldsymbol{M}}
\newcommand{\mN}{\boldsymbol{N}}
\newcommand{\mP}{\boldsymbol{P}}
\newcommand{\mQ}{\boldsymbol{Q}}
\newcommand{\mO}{\boldsymbol{O}}
\newcommand{\mR}{\boldsymbol{R}}
\newcommand{\mS}{\boldsymbol{S}}
\newcommand{\mU}{\boldsymbol{U}}
\newcommand{\mV}{\boldsymbol{V}}

\newcommand{\mY}{\boldsymbol{Y}}
\newcommand{\mZ}{\boldsymbol{Z}}
\newcommand{\mW}{\boldsymbol{W}}

\newcommand{\mlamb}{\boldsymbol{\Lambda}}

\newcommand{\bvD}{\boldsymbol{\bar{\mD}}}
\newcommand{\bvE}{\boldsymbol{\bar{\mE}}}
\newcommand{\bvP}{\boldsymbol{\bar{\mP}}}
\newcommand{\bvQ}{\boldsymbol{\bar{\mQ}}}
\newcommand{\bvR}{\boldsymbol{\bar{\mR}}}

\newcommand{\EE}{\mathbb{E}}

\newcommand{\tr}{\mbox{tr}}
\newcommand{\Var}{\text{Var}}

\newcommand{\argmax}{\operatornamewithlimits{argmax}}
\newcommand{\argmin}{\operatornamewithlimits{argmin}}

\newcommand{\Proj}{\mathcal{P}_{\Delta}}

\newcommand{\Prob}{\mathrm{Prob}}

\newcommand{\Perm}{\text{Perm}}

\newcommand{\ppm}{\text{ppm}}

\ifpdf
\hypersetup{
  pdftitle={Recovery Theory for Projected Power Iterations in Permutation Synchronization},
  pdfauthor={V. Huroyan, G. Lerman}
}
\fi

\keywords{
permutation synchronization, projected power method, block-error contraction,
almost-exact recovery, uniform corruption model, spectral initialization,
partial permutations
}

\subjclass[2020]{68W40, 90C27, 90C26, 60B20}

\begin{document}

\maketitle


\begin{abstract}
We study the projected power method (PPM) for synchronizing \(n\) unknown
permutations of \(m\) objects under a possibly sparse uniform corruption
model. Each pair is observed with probability \(p\), and an observed
measurement is uncorrupted with probability \(\pi_0\) and is otherwise an
independent uniform permutation. Under \(\log m=o(np\pi_0^2)\), we prove
exact one-step recovery (with high probability) of each prescribed block for an independent estimate
with a fixed positive majority of correct blocks. When
\(np\ge C_0\log n\) and \(m=o(np\pi_0^2)\), we prove that one
high-probability event yields a block-error contraction simultaneously for
every estimate whose optimally aligned error is at most \(0.5-\epsilon\).
The contraction factor is \(O(m/(np\pi_0^2))\) and the error floor is 
\(
O(
e^{-cnp\pi_0}
+e^{-cnp\pi_0^2}
+{\log n}/{n}
).
\)
Consequently, one update maps every possibly data-dependent estimate in this
basin to vanishing block error, and all subsequent iterates remain almost
exact uniformly over the iteration index. The
one-step and trajectory results extend to independent, non-identically
distributed, permutation-valued corruptions with mean
\(m^{-1}\vone\vone^\top\). Under the uniform model, a reference-block
spectral initializer has aligned block error
\(O_{\mathbb P}(m/(np\pi_0^2))\), yielding an end-to-end almost-exact
recovery guarantee. Under a stronger all-block signal condition, PPM reaches
exact recovery after finitely many iterations. The theory transfers exactly
to partial permutations with common support; for varying supports, we
establish deterministic and probabilistic co-visibility margins.
\end{abstract}



\section{Introduction}
\label{sec:mgm_intro}

Suppose that \(n\) observations contain the same \(m\) latent objects, in
possibly different orders. The unknown correspondence associated with
observation \(i\) is represented by an absolute permutation
\(\mP_i\in\Perm(m)\). Permutation synchronization seeks to recover
\(\mP_1,\ldots,\mP_n\), up to a common right permutation, from noisy (or corrupted)
observations of the relative permutations \(\mP_i\mP_j^\top\). It provides a
consistency mechanism for multi-way matching: pairwise correspondences
computed independently may disagree around cycles, whereas a family of
absolute permutations induces globally consistent pairwise correspondences.
Applications include shape
matching~\cite{huang2012optimization,huang2013consistent}, multiview
reconstruction~\cite{goesele07multi}, and structure from
motion~\cite{agarwal2011building}. A natural extension, partial permutation
synchronization, allows the visible latent objects to vary across
observations.

This paper focuses on a possibly sparse uniform corruption model. Each unordered pair
\(\{i,j\}\) is observed independently with probability \(p\). Conditional on
being observed, its measurement is drawn from the uncorrupted component, and
hence equals \(\mP_i\mP_j^\top\), with probability \(\pi_0\); otherwise it is
an independent uniformly random permutation. Thus, \(np\) is the
expected-degree scale, \(np\pi_0\) is the scale of the expected number of
uncorrupted incident measurements, and \(np\pi_0^2\) is the effective squared
signal-to-noise scale appearing in our guarantees. 
The precise model is given in
Definition~\ref{def:uniform_corruption_model}.

\paragraph{Previous works}
Methods for permutation synchronization include spectral
relaxations~\cite{pachauri2013solving,shen2016normalized}, semidefinite
relaxations~\cite{huang2013consistent}, and direct coordinate
methods~\cite{tang2017initialization}. Projected and generalized power
methods have also been studied extensively for phase synchronization,
including convergence and near-optimal statistical
guarantees~\cite{boumal2016nonconvex,zhong2018near}. 
Higher-order cycle information has been used to suppress corrupted
measurements~\cite{lerman2022robust,shi2020robust}. 
The projected power method (PPM) and closely related iterative methods are
particularly attractive because their updates reduce to matrix
multiplication followed by separate blockwise assignment problems, and they have demonstrated competitive
empirical accuracy and scalability in joint alignment and permutation
synchronization~\cite{chen2016projected,huroyan2018mathematical,
shi2020robust}.

Existing theoretical analyses concern several related settings.
Chen and Cand\`es~\cite{chen2016projected} analyze projected-power
iterations for cyclic joint alignment and report initial experiments for
permutation synchronization. A preliminary treatment of PPM for permutation
synchronization appears in~\cite{huroyan2018mathematical}, but without the
basin-uniform trajectory theory developed here.

The closest prior trajectory results are those of Liu, Yue, and
So~\cite{liu2023unified} and Gao and Zhang~\cite{gao2022iterative}.
Liu, Yue, and So establish a deterministic theorem for the generalized
power method that covers the permutation group and yields an
aligned-Frobenius recursion for the initialized trajectory. Their explicit
random-model specialization uses an Erd\H{o}s--R\'enyi measurement graph
with additive i.i.d.\ sub-Gaussian matrix noise. In the permutation
specialization, their initialization condition requires aligned Frobenius
error at most \(\sqrt{n}/8\). Since every incorrectly aligned permutation
block contributes at least \(4\) to the squared Frobenius error, this
condition implies aligned block error at most \(1/256\), and their recursion
uses the fixed geometric factor \(1/\sqrt{2}\). Under an additive Gaussian
observation model, Gao and Zhang prove that the Hamming error of an
iterative permutation estimator contracts geometrically to the minimax
statistical floor. These results provide important precedents for iterative
error-to-floor and almost-exact recovery.

Under uniform corruption, Lerman and Shi~\cite{lerman2022robust} establish
exact-recovery guarantees for cycle-edge message passing, while
Ling~\cite{ling2022near} proves near-optimal blockwise guarantees and exact
recovery for spectral rounding. Under additive Gaussian noise, Nguyen and
Zhang~\cite{nguyen2025novel} obtain a sharp minimax-optimal error bound for
an aggregated-anchor spectral method. These results concern one-shot
spectral estimation or a different robust message-passing procedure rather
than the data-dependent block-error trajectory of ordinary unweighted PPM
under permutation-valued corruption.

\paragraph{Our work}
Our central contribution is a basin-uniform trajectory theory for ordinary
unweighted PPM under the permutation-valued uniform corruption model. We first prove prescribed-block one-step recovery under
\(\log m=o(np\pi_0^2)\) for an estimate that is independent of the
measurement matrix and has a fixed positive majority of correct blocks. This fixed-estimate result
cannot be iterated directly because subsequent PPM iterates depend on the
same measurements.

Our main theorem resolves this dependence. Under \(np\ge C_0\log n\) and
\(m=o(np\pi_0^2)\), for every fixed \(\epsilon\in(0,0.5)\), one
high-probability event gives a contraction inequality 
simultaneously for every aligned estimate having at least
\((0.5+\epsilon)n\) correct blocks, including estimates selected after
observing the data. Consequently, the same event controls the complete PPM
trajectory without sample splitting, leave-one-out sequences, or a union
bound over iteration indices. Let \(\delta_k\) denote the optimally aligned fraction of incorrectly
recovered blocks after \(k\) PPM iterations. The recursion is
\(\delta_{k+1}\le\gamma_n\delta_k+\xi_n\), where
\(\gamma_n=O(m/(np\pi_0^2))\) and
\(\xi_n=O(e^{-cnp\pi_0}+e^{-cnp\pi_0^2}+\log n/n)\). 
Both \(\gamma_n\) and \(\xi_n\) tend to zero. Hence one PPM update maps the
entire majority basin to vanishing block error, and
\(\sup_{k\ge1}\delta_k=o(1)\) on the same event. We call this
uniform-in-iteration almost-exact recovery: the fraction of
incorrectly recovered blocks vanishes uniformly over the trajectory,
although the number of incorrect blocks need not vanish. This is the
principal statistical consequence of the basin-uniform theorem.

The one-step and trajectory results extend to independent, not necessarily
identically distributed, permutation-valued corruptions with mean
\(m^{-1}\vone\vone^\top\). Extending the fixed-reference spectral initializer
and the end-to-end result beyond the uniform law additionally requires the
block-exchangeability condition stated in
Remark~\ref{rem:beyond_uniform_law}. Under the uniform model, the
reference-block spectral initializer has aligned block error
\(O_{\mathbb P}(r_n)\), where \(r_n:=m/(np\pi_0^2)\). Combining it with the
trajectory theorem gives
\(\delta_k=O_{\mathbb P}(r_n^{k+1}+\xi_n)\) for every fixed \(k\ge0\), so each
fixed refinement contributes an additional factor \(r_n\) until the error
floor is reached. Under the stronger all-block condition
\(cnp\pi_0^2-\log n\to\infty\), the same trajectory reaches exact recovery
after finitely many iterations.

To our knowledge, this is the first contraction result for ordinary
permutation PPM under permutation-valued uniform corruption that holds on
one event uniformly over an entire fixed-margin majority block-error basin.

\paragraph{Interpretation of the statistical regimes}
The relevant scales have direct statistical meanings. The quantities \(np\)
and \(np\pi_0\) are, respectively, the observed- and uncorrupted-degree
scales. The score gap favoring the truth is of order
\(np\pi_0\), whereas our concentration bounds control the relevant centered
fluctuations on the scale \(\sqrt{np}\). Comparing these two scales explains
why \(np\pi_0^2\) acts as the effective squared signal-to-noise ratio.

The assumptions can be read directly from the resulting bounds. For one
prescribed block, the relevant failure term before absorbing the dimension
factor has the form \(m e^{-cnp\pi_0^2}\), so
\(\log m=o(np\pi_0^2)\) makes it vanish. For spectral initialization and uniform contraction, the initializer error and contraction factor both scale
as \(m/(np\pi_0^2)\), so \(m=o(np\pi_0^2)\) makes them vanish. The separate
condition \(np\ge C_0\log n\) provides global spectral-norm control of the
sparse observation matrix. Finally, exact recovery of every block has a
failure term of the form \(n e^{-cnp\pi_0^2}\), leading to
\(cnp\pi_0^2-\log n\to\infty\). The two appearances of \(\log n\)
have different roles: one controls the sparse matrix globally, while the
other rules out errors at all \(n\) blocks.

The implications are clearest in three complementary regimes. First, if
\(m\) and \(\pi_0>0\) are fixed, the prescribed-block result applies whenever
\(np\to\infty\). Under the present unregularized spectral analysis,
\(np\ge C_0\log n\) yields end-to-end uniform-in-iteration almost-exact
recovery. Since \(\pi_0\) is fixed in this regime, a sufficiently large
constant in the same degree scale also satisfies the stronger sufficient
condition for exact recovery. Thus, each observation need only be compared
with order \(\log n\) others on average, and the expected total number of
observed pairs is of order \(n\log n\), rather than order \(n^2\).

Second, if \(p\) and \(\pi_0\) are bounded below while \(m\) grows, the
condition \(m=o(np\pi_0^2)\) reduces to \(m=o(n)\). Thus, the number of
objects may have any sublinear growth in the number of observations.

Third, suppose that \(p=1\) and \(m\) is fixed, allowing \(\pi_0\) to
decrease with \(n\). Then spectral initialization and uniform-in-iteration
almost-exact recovery hold for \(\pi_0\gg n^{-1/2}\), whereas the present
finite-step exact-recovery guarantee requires \(\pi_0\) to be a sufficiently
large constant multiple of \(\sqrt{\log n/n}\). More generally, subject also to \(np\ge C_0\log n\), the intermediate regime
\(m=o(np\pi_0^2)\) but \(np\pi_0^2=o(\log n)\) is an almost-exact regime in
which the fraction of incorrect blocks vanishes uniformly over the
trajectory, although the present theory does not guarantee exact recovery. Thus, almost-exact recovery is governed by the
dimension-to-signal ratio \(m/(np\pi_0^2)\), whereas exact recovery
additionally requires the all-block \(\log n\) scale.

\paragraph{Partial permutation synchronization and multi-object matching}
This extension allows observation \(i\) to contain only a subset of the
latent objects. The
absolute correspondences are then rectangular row-injection matrices, and
their pairwise products are partial permutations. In the universe-based
formulation considered here, this is the synchronization formulation of
cycle-consistent multi-object matching: the rows index the features of each
object, the columns index a common latent universe, and the pairwise
correspondences are induced by the object-to-universe maps.

After embedding these correspondences as partial bijections of the common
universe, the feasible transformations can be viewed as elements of the
symmetric inverse semigroup~\cite{arrigoni2017synchronization}. Algorithmic approaches include spectral methods~\cite{maset2017practical,
arrigoni2017synchronization}, alternating-minimization
methods~\cite{zhou2015multi}, convex relaxation~\cite{chen2014near},
nonnegative factorization~\cite{bernard2018synchronisation},
higher-order projected-power methods~\cite{bernard2018higher}, and a scalable
combination of cycle-edge message passing with weighted projected-power
iterations~\cite{li2022fast}. Related methods filter corrupted keypoint
matches using cluster-consistency statistics~\cite{shi2021scalable}. When the visible supports vary, the feasible
transformations do not form a group, so the full-permutation theory does not
extend automatically.

We identify precisely how far our theory extends to this setting. When all
observations share a common visible support, the problem reduces exactly to
permutation synchronization on that support and therefore inherits the full
end-to-end theory, including uniform-in-iteration almost-exact recovery and
finite-step exact recovery, with \(m\) replaced by the support size. When
the supports vary, we establish a deterministic co-visibility margin and a
high-probability lower bound under an independent random-support model, but
do not claim general end-to-end recovery.

\paragraph{Structure of this paper}
The rest of this paper is organized as follows.
\Cref{sec:mgm_math_setting} formulates permutation synchronization under the
uniform corruption model, and \Cref{sec:mgm_ppm} describes the projected power
iteration and its spectral initialization.
\Cref{sec:mgm_theory} establishes prescribed-block one-step recovery and
uniform basin contraction, derives uniform-in-iteration almost-exact
recovery and finite-step exact recovery, and proves the spectral-initializer
result needed for an end-to-end guarantee. 
Finally, \Cref{sec:partial_permutations} formulates partial-permutation
synchronization, proves end-to-end recovery under common support, and gives
deterministic and probabilistic co-visibility estimates for varying supports.

\section{Mathematical Setting for Permutation Synchronization}
\label{sec:mgm_math_setting}
Let \(\Perm(m)\) denote the set
of \(m\times m\) permutation matrices and let  \(\mP_1,\ldots,\mP_n\in\Perm(m)\). For every \(i\ne j\), we observe a
noisy measurement of the relative permutation
\(\mP_i\mP_j^{-1}=\mP_i\mP_j^\top\). These measurements are stored as the
\(m\times m\) off-diagonal blocks of a matrix
\(\mL\in\mathbb R^{nm\times nm}\).

For exposition, we first describe the complete-observation specialization
of the uniform corruption model. Independently for every \(1\le i<j\le n\),
let
\(
C_{ij}\sim\operatorname{Bernoulli}(\pi_0),
\)
\(
\mU_{ij}\sim\operatorname{Unif}(\Perm(m)),
\)
and define
\begin{equation}
\label{eq:rand_corr}
\mL_{ij}
=
C_{ij}\mP_i\mP_j^\top
+
(1-C_{ij})\mU_{ij},
\text{ and }
\mL_{ji}:=\mL_{ij}^\top
\text{ for }
1\le i<j\le n, 
\text{ and }
\mL_{ii}:=\mI_m.
\end{equation}
Here \(\pi_0\) is the probability of selecting the uncorrupted component,
and \(1-\pi_0\) is the corruption probability. Because a uniformly random
permutation can coincide with the ground-truth relative permutation, the
probability that a measurement equals the ground truth is
\(
\pi_0+{(1-\pi_0)}/{m!}.
\)
The theoretical analysis uses the more general Erd\H{o}s--R\'enyi
observation model of Definition~\ref{def:uniform_corruption_model}; the
complete-observation model above corresponds to \(p=1\). Before applying
PPM, identity diagonal blocks are removed as described in that definition.

In the complete-observation model above, if \(\pi_0=1\), then all measurements are uncorrupted and \(\mL=\bvP\bvP^\top\), where \(\bvP=[\mP_1^\top,\ldots,\mP_n^\top]^\top\).
Permutation synchronization asks to determine the permutation matrices
\(\mP_1,\dots,\mP_n\), up to a common right permutation, from \(\mL\).
Let \(\Perm^n(m)\) denote the set of block vectors
\(\bvQ=[\mQ_1^\top,\ldots,\mQ_n^\top]^\top\in\mathbb R^{nm\times m}\)
such that \(\mQ_i\in\Perm(m)\) for every \(i\in[n]\). We consider the
optimization problem
\begin{equation}
\label{eq:perm_sync}
\min_{\bvQ\in\Perm^n(m)}
\|\mL-\bvQ\bvQ^\top\|_F^2.
\end{equation}
Since \(\|\bvQ\bvQ^\top\|_F^2\) is constant over
\(\Perm^n(m)\), \eqref{eq:perm_sync} is equivalent to
\(\max_{\bvQ\in\Perm^n(m)}\tr(\bvQ^\top\mL\bvQ)\).

A standard relaxation replaces the block-permutation constraint by
\(
\bvQ^\top\bvQ=n\mI_m.
\)
Its maximizers are obtained from the top \(m\) eigenvectors of \(\mL\).
The spectral method of Pachauri et al.~\cite{pachauri2013solving} solves
this relaxation and then projects its \(m\times m\) blocks onto
\(\Perm(m)\).

We study the blockwise projected power method for permutation
synchronization. Unlike the spectral method, which performs an eigenspace
computation followed by blockwise rounding, PPM repeatedly alternates between
multiplication by \(\mL\) and projection of each block onto \(\Perm(m)\).
Thus, every iterate remains feasible for the original discrete problem.

\section{Projected Power Method}
\label{sec:mgm_ppm}

In this section we review the Projected Power Method for permutation synchronization. We use the zero-diagonal convention of Definition~\ref{def:uniform_corruption_model}; thus, if the measurements are stored with \(\mL_{ii}=\mI_m\) as in~\eqref{eq:rand_corr}, we replace \(\mL\) by \(\mL-\mI_{nm}\). Assume that we are given this preprocessed matrix and an initial estimate
\(\bvQ^{(0)}=[(\mQ_1^{(0)})^\top,\ldots,
(\mQ_n^{(0)})^\top]^\top\), where
\(\mQ_1^{(0)},\ldots,\mQ_n^{(0)}\in\Perm(m)\).
At iteration \(t\ge1\), PPM first computes
\(\widetilde{\bvQ}^{(t)}=\mL\bvQ^{(t-1)}\) and then projects each
\(m\times m\) block of \(\widetilde{\bvQ}^{(t)}\) onto \(\Perm(m)\).

For \(\mA\in\mathbb R^{m\times m}\), define
\begin{equation}
\label{eq:proj_step}
\Proj(\mA)
\in
\argmax_{\mQ\in\Perm(m)}
\tr(\mQ^\top\mA).
\end{equation}
This is the linear assignment problem with weight matrix \(\mA\).
When the maximizer is not unique, we use an arbitrary fixed deterministic
tie-breaking rule. 
Several well-known algorithms can be used to solve \eqref{eq:proj_step}; in this work, we use the Hungarian algorithm, also known as the Kuhn--Munkres or Munkres assignment algorithm \cite{kuhn1955hungarian}.

The choice of initialization plays an important role in the convergence
behavior of the algorithm. We use a spectral initialization. Specifically,
let
\(
\mlamb\in\mathbb R^{nm\times m}
\)
have as its columns an orthonormal basis for the leading \(m\)-dimensional
eigenspace of \(\mL\), and let
\(\mlamb_i\in\mathbb R^{m\times m}\) denote its \(i\)-th block. We initialize
the algorithm by setting
\(
\mQ_i^{(0)}
=
\Proj\bigl(n\mlamb_i\mlamb_1^\top\bigr),
\)
\(
i=1,\dots,n.
\)
The product \(\mlamb_i\mlamb_1^\top\) is invariant under replacing
\(\mlamb\) by \(\mlamb \mO\) for any orthogonal matrix \(\mO\), and therefore
does not depend on the choice of basis for the leading eigenspace. The factor
\(n\) does not change the solution of the assignment problem, but places the
population blocks on the scale of permutation matrices.

In \Cref{sec:spectral_initialization}, we prove that this initializer has
vanishing aligned block error under \(np\ge C_0\log n\) and
\(m=o(np\pi_0^2)\). In particular, under the trajectory assumptions it enters
the positive-overlap regime required by
Theorem~\ref{th:iter_contraction_random_observation}. The complete PPM algorithm is summarized in
\Cref{algo:ppm_mgm}.

\begin{algorithm} 
\caption{Projected Power Method for Permutation Synchronization} 
\label{algo:ppm_mgm} 
\begin{algorithmic} 
	\State \textbf{Input}: A symmetric block matrix \(\mL\in\mathbb R^{nm\times nm}\) with \(m\times m\) blocks and zero diagonal blocks, and a stopping parameter \(T_{\ppm}\). 
    \State \textbf{Find} \(\mlamb\in\mathbb R^{nm\times m}\) whose columns form an orthonormal basis for the leading \(m\)-dimensional eigenspace of \(\mL\).
    \State \textbf{Set}: \(\mQ_i^{(0)}=\Proj(n\mlamb_i\mlamb_1^\top)\) for \(1\le i\le n\), and let  \(\bvQ^{(0)} = [(\mQ_1^{(0)})^\top,\ldots,(\mQ_n^{(0)})^\top]^\top\).
\For{\(1\le t\le T_{\ppm}\)}
    \State \textbf{Set}: \(\mQ_i^{(t)}
    =\Proj((\mL\bvQ^{(t-1)})_i)\) for all \(1\le i\le n\), and let
    \(\bvQ^{(t)}
    =[(\mQ_1^{(t)})^\top,\ldots,(\mQ_n^{(t)})^\top]^\top\).
\EndFor    
\State \textbf{Output}: \(\bvQ^{(T_{\ppm})}\).
\end{algorithmic}
\end{algorithm}

\section{Theoretical Analysis}
\label{sec:mgm_theory}

We analyze the projected power method under the uniform corruption model.
We first establish one-step recovery for an estimate in the positive-overlap
basin that is independent of the measurement matrix. This result identifies the fixed-estimate recovery
mechanism but cannot be iterated directly because the PPM iterates depend on
the same measurements. We then prove a contraction estimate that holds uniformly over every estimate
in the positive-overlap basin and hence applies to the complete
data-dependent trajectory. From this recursion we first derive
uniform-in-iteration almost-exact recovery and then, under a stronger
all-block signal condition, finite-step exact recovery. Finally, we prove a
standalone vanishing-block-error guarantee for the spectral initializer and
combine it with the contraction theorem to obtain end-to-end almost-exact
recovery, with finite-step exact recovery in the stronger regime.

\subsection{Definitions and Assumptions}
\label{sec:def_assump}
We use the following definitions and assumptions:

\begin{definition}[uniform corruption model]
\label{def:uniform_corruption_model}
Let \(m\ge2\), and let
\(
\mP_1,\dots,\mP_n\in\Perm(m)
\)
denote the ground-truth permutation matrices. Let \(p\in(0,1]\) be an
edge-observation probability, and let \(\pi_0\in(0,1]\) be the probability
of selecting the uncorrupted component conditional on observing an edge.
Independently for every \(1\le i<j\le n\), let
\(
A_{ij}\sim\operatorname{Bernoulli}(p),
\)
\(
C_{ij}\sim\operatorname{Bernoulli}(\pi_0),
\)
\(
\mU_{ij}\sim\operatorname{Unif}(\Perm(m)),
\)
and assume that all these random variables are mutually independent. The
observation indicators \(\{A_{ij}\}_{i<j}\) generate an Erd\H{o}s--R\'enyi
graph \(G(n,p)\).
Define the block measurement matrix
\(\mL\in\mathbb R^{nm\times nm}\) by
\(
\mL_{ij}
=
A_{ij}
(
C_{ij}\mP_i\mP_j^\top
+
(1-C_{ij})\mU_{ij}
),
\)
\(
1\le i<j\le n, 
\)
\(
\mL_{ji}:=\mL_{ij}^\top,
\)
and
\(
\mL_{ii}:=\boldsymbol{0}_{m\times m}.
\)

We refer to this model as the uniform corruption model
for permutation synchronization. It is the specialization to the permutation
group of the uniform corruption model for compact-group synchronization
in~\cite{lerman2022robust}. In the notation
\(\operatorname{UCM}(n,p,q_{\mathrm{corr}})\) used there,
\(
q_{\mathrm{corr}}=1-\pi_0.
\)
Since a uniformly random permutation can coincide with the ground-truth
relative permutation, the conditional probability that an observed
measurement equals the ground truth is
\(
\pi_0+ {(1-\pi_0)}/{m!}.
\)
If the input matrix has identity diagonal blocks, we replace it by
\(\mL-\mI_{nm}\) before applying the projected power iteration.
\end{definition}

\begin{remark}[Extension beyond the uniform law]
\label{rem:beyond_uniform_law}
The one-step and uniform-contraction arguments use the uniform law only
through the following properties: the matrices
\(\{\mU_{ij}:i<j\}\) are independent, permutation-valued, independent of
the observation and non-corruption indicators, and satisfy
\(\EE[\mU_{ij}]=m^{-1}\vone\vone^\top\) for every \(i<j\). Under the remaining assumptions of the
corresponding results, the conclusions of
Theorem~\ref{th:main_th},
Corollary~\ref{cor:simultaneous_one_step_recovery},
Theorem~\ref{th:iter_contraction_random_observation}, and
Corollary~\ref{cor:iterative_exact_recovery}
remain valid for independent, not necessarily identically distributed,
permutation-valued corruptions satisfying this mean condition. This includes,
for example, corruption drawn uniformly from any transitive subgroup of
\(\Perm(m)\), such as the cyclic-shift subgroup.

For the spectral-initialization result, define, for \(i<j\),
\(
\boldsymbol{V}_{ij}:=\mP_i^\top\mU_{ij}\mP_j,
\)
\(
\boldsymbol{V}_{ji}:=\boldsymbol{V}_{ij}^\top.
\)
Suppose additionally that the gauged corruption array is block-exchangeable,
meaning that
\((\boldsymbol{V}_{\sigma(i)\sigma(j)})_{i\ne j}
\overset{\mathrm d}{=}(\boldsymbol{V}_{ij})_{i\ne j}\)
for every permutation \(\sigma\) of \([n]\).
Then the conclusions of Theorem~\ref{th:spectral_initialization} and
Corollary~\ref{cor:end_to_end_spectral_ppm} also remain valid. The mean
condition gives the same population matrix and spectral-norm bound, while
block exchangeability is used only in the final fixed-reference step that
transfers the global spectral-projector bound to the first reference block.
The uniform law automatically satisfies all of these requirements.
\end{remark}

\begin{definition}[Normalized block error]
\label{def:delta_def}
For \(\mP_i,\mQ_i\in\Perm(m)\), $1 \le i \le n$, denote \newline
\(
\bvP=[\mP_1^\top,\ldots,\mP_n^\top]^\top,
\)
and 
\(
\bvQ=[\mQ_1^\top,\ldots,\mQ_n^\top]^\top.
\)
Define
\(
\delta(\bvP,\bvQ)
:=
\frac{1}{n}
\left|
\left\{
i\in[n]:\mP_i\neq\mQ_i
\right\}
\right|,
\)
and
\(
\delta_{\mathrm{glob}}(\bvP,\bvQ)
:=
\min_{\mR\in\Perm(m)}
\delta(\bvP,\bvQ\mR),
\) 
where
\(
\bvQ\mR
:=
[(\mQ_1\mR)^\top,\ldots,(\mQ_n\mR)^\top]^\top.
\)
\end{definition}
\begin{definition}[Read-\(k\) family]
Following~\cite{gavinsky2015tail}, a family of random variables \(Z_1,\dots,Z_N\) is called a
read-\(k\) family if there exist mutually independent random
variables \(X_1,\dots,X_M\), subsets
\(
P_1,\dots,P_N\subseteq[M],
\)
and measurable functions \(f_1,\dots,f_N\) such that
\(
Z_i
=
f_i\bigl((X_\ell)_{\ell\in P_i}\bigr),
\)
\(
i\in[N],
\)
and every underlying variable \(X_\ell\) appears in at most \(k\) of the
sets \(P_1,\dots,P_N\). Equivalently,
\(
\left|
\left\{
i\in[N]:\ell\in P_i
\right\}
\right|
\le k
\)
\(
\text{for every }\ell\in[M].
\)
\end{definition}
\begin{assumption}[Per-block scaling]
\label{ass:one_step_scaling}
Suppose that \(\mL\) follows
Definition~\ref{def:uniform_corruption_model} with
\(m=m_n\), \(p=p_n\in(0,1]\), and
\(\pi_0=\pi_{0,n}\in(0,1]\). Assume that
\(
{\log m_n}/{(np_n\pi_{0,n}^2)}\to0.
\)
\end{assumption}

\begin{assumption}[Trajectory scaling]
\label{ass:uniform_corruption_scaling}
Suppose that \(\mL\) follows
Definition~\ref{def:uniform_corruption_model} with
\(m=m_n\), \(p=p_n\in(0,1]\), and
\(\pi_0=\pi_{0,n}\in(0,1]\). Assume that
\(
m_n/(np_n\pi_{0,n}^2)\to0.
\)
\end{assumption}

\begin{remark}
\label{rem:scaling_assumption_implies_limit}
Since \(m_n\ge2\), either scaling assumption implies
\(np_n\pi_{0,n}^2\to\infty\). Moreover,
\(\log m_n\le m_n\), so the trajectory scaling implies the per-block
scaling. Since \(\pi_{0,n}\le1\), the trajectory scaling also gives
\(
{m_n}/{(np_n)}
=
\pi_{0,n}^2
{m_n}/{(np_n\pi_{0,n}^2)}
\to0,
\)
and hence \(m_n/n\to0\).
\end{remark}

\begin{assumption}[Positive-overlap regime]
\label{ass:positive_overlap}
There exists a constant \(\epsilon\in(0,0.5)\), independent of \(n\), such that
\(
\delta_{\mathrm{glob}}(\bvP,\bvQ)
\le 0.5-\epsilon.
\)
Let
\(
\mR_\star
\in
\argmin_{\mR\in\Perm(m)}
\delta(\bvP,\bvQ\mR).
\)
Throughout the subsequent analysis, we work with the aligned representative
\(\bvQ\mR_\star\), which with a slight abuse of notation, we continue to
denote by \(\bvQ\). Thus,
\(
\delta(\bvP,\bvQ)
\le 0.5 - \epsilon.
\)
\end{assumption}
Thus, after optimal global alignment, an estimate in the positive-overlap
regime agrees exactly with the ground truth on at least
\((0.5+\epsilon)n\) blocks. The threshold \(0.5\) is natural: apart from the single omitted
self-index, the expected difference between the uncorrupted contributions
from correctly and incorrectly estimated blocks has leading term
\(np\pi_0\bigl(1-2\delta(\bvP,\bvQ)\bigr)\), which changes sign at block-error
rate \(0.5\).

\begin{assumption}[Independent estimate]
\label{ass:independent_estimates}
The estimate \(\bvQ\) is independent of all random variables used to
generate \(\mL\); equivalently,
\(
\bvQ
\)
is independent of
\(\left\{
A_{ij},C_{ij},\mU_{ij}:
1\le i<j\le n
\right\}.
\)
\end{assumption}


\subsection{Supporting Lemmas}
\label{sec:supporting_lemmas}

We establish four lemmas that support our theory. The first two are used
in the one-step recovery result. The final two
provide the spectral and read-\(2\) concentration estimates needed for
the data-dependent iterative contraction analysis.

\begin{lemma}[Concentration of sparse permutation noise]
\label{lem:bernstein_randomly_observed_perm}
Let
\(\eta_1,\dots,\eta_k
\overset{\mathrm{i.i.d.}}{\sim}\operatorname{Bernoulli}(p)\),
and let \(\mA_1,\dots,\mA_k\) be independent random matrices taking values
in \(\Perm(m)\) and satisfying
\(
\EE[\mA_j]
=
m^{-1}\vone\vone^\top,
\quad
1\le j\le k.
\)
Assume that the two collections are mutually independent, and define
\(
\mZ
:=
\sum_{j=1}^k
\left(
\eta_j\mA_j-\frac{p}{m}\vone\vone^\top
\right).
\)
Then, 
\begin{equation}
\label{eq:bernstein_randomly_observed_perm}
\Prob\left(\|\mZ\|_2\ge t\right)
\le
2m\exp\left(
-\frac{t^2}{2(kp+t/3)}
\right) \ \forall t>0.
\end{equation}
\end{lemma}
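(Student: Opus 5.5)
We apply the matrix Bernstein inequality for sums of independent, mean-zero, non-Hermitian (rectangular) random matrices. Write
\[
\mZ=\sum_{j=1}^k \mS_j,\qquad \mS_j:=\eta_j\mA_j-\frac{p}{m}\vone\vone^\top .
\]
By independence of \(\eta_j\) and \(\mA_j\), we have \(\EE[\mS_j]=p\,m^{-1}\vone\vone^\top-p\,m^{-1}\vone\vone^\top=\vzero\), and the \(\mS_j\) are independent. The rectangular Bernstein bound for \(d_1\times d_2\) summands reads
\[
\Prob(\|\mZ\|_2\ge t)\le (d_1+d_2)\exp\!\left(-\frac{t^2/2}{v+Rt/3}\right).
\]
With \(d_1=d_2=m\) the prefactor is \(2m\). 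It therefore suffices to show \(\|\mS_j\|_2\le R=1\) almost surely and \(v\le kp\), where
\[
v=\max\Bigl(\Bigl\|\sum_j\EE[\mS_j\mS_j^\top]\Bigr\|_2,\ \Bigl\|\sum_j\EE[\mS_j^\top\mS_j]\Bigr\|_2\Bigr).
\]

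\emph{Uniform bound.} Let \(\mJ:=m^{-1}\vone\vone^\top\), the orthogonal projector onto \(\operatorname{span}(\vone)\). Every permutation matrix \(\mA\) fixes \(\vone\), so \(\mA\) maps \(\vone^\perp\) to itself. If \(\eta_j=0\), then \(\mS_j=-p\mJ\), which has norm \(p\le1\). If \(\eta_j=1\), then \(\mS_j=\mA_j-p\mJ\) acts as multiplication by \(1-p\) on \(\operatorname{span}(\vone)\) and as the orthogonal map \(\mA_j\) on \(\vone^\perp\). Hence \(\|\mS_j\|_2=\max(1-p,1)=1\), and we may take \(R=1\).

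\emph{Variance.} Using \(\mA_j\vone=\mA_j^\top\vone=\vone\), we get \(\mA_j\mJ=\mJ\mA_j^\top\)-type identities \(\mA_j\mJ=\mJ\), \(\mJ\mA_j^\top=\mJ\), and \(\mJ^2=\mJ\). Expanding,
\[
\mS_j\mS_j^\top=\eta_j\mI-2p\eta_j\mJ+p^2\mJ,
\]
since \(\eta_j^2=\eta_j\) and \(\mA_j\mA_j^\top=\mI\). Taking expectations gives
\[
\EE[\mS_j\mS_j^\top]=p\mI-p^2\mJ=p(\mI-\mJ)+p(1-p)\mJ,
\]
whose norm is \(\max(p,\,p(1-p))=p\). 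The same computation applies to \(\mS_j^\top\mS_j\). Summing over \(j\), \(v=kp\).

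Substituting \(R=1\) and \(v=kp\) into the Bernstein bound yields exactly \eqref{eq:bernstein_randomly_observed_perm}. The only step that needs care is the uniform bound. A crude triangle inequality gives \(\|\mS_j\|_2\le 1+p\), which would produce \(t/3\cdot 2\) in the denominator instead of \(t/3\). The invariant-subspace decomposition along \(\vone\) and \(\vone^\perp\) is what delivers \(R=1\) exactly.
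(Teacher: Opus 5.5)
Your proof is correct and follows the paper's own route: the rectangular matrix Bernstein inequality (Tropp, Theorem~6.1.1) with $L=1$ and variance parameter $kp$, using the same computation $\EE[\mS_j\mS_j^\top]=\EE[\mS_j^\top\mS_j]=p\mI_m-p^2\mJ$. The only cosmetic difference is in the uniform bound: the paper factors $\mA_j-p\mJ=\mA_j(\mI_m-p\mJ)$ and uses orthogonal invariance of the spectral norm, whereas you decompose along $\operatorname{span}(\vone)\oplus\vone^\perp$, and these amount to the same observation.
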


\begin{proof}
Let
\(
\mR:=\frac{1}{m}\vone\vone^\top
\)
and, for each $j\in\{1,\dots,k\}$, define
\(
\mS_j:=\eta_j\mA_j-p\mR.
\)
Since $\EE[\eta_j]=p$, $\EE[\mA_j]=\mR$, and $\eta_j$ is independent
of $\mA_j$, we have
\(
\EE[\mS_j]
=
\EE[\eta_j\mA_j]-p\mR
=
p\mR-p\mR
=
\vzero.
\)
Thus,
\(
\mZ=\sum_{j=1}^k\mS_j
\)
is a sum of independent mean-zero random matrices.

We first bound the norm of each summand. Since every permutation matrix
fixes $\vone$, we have
\(
\mA_j\mR=\mR,
\)
\(
\mR\mA_j^\top=\mR.
\)
If $\eta_j=1$, then $\mS_j=\mA_j-p\mR$. 
Since $\mA_j\mR=\mR$, we have
\(
\mA_j-p\mR=\mA_j(\mI_m-p\mR).
\)
Because $\mA_j$ is orthogonal and $\mR$ is symmetric with eigenvalues
in $\{0,1\}$,
\(
\|\mA_j-p\mR\|_2
=
\|\mI_m-p\mR\|_2
\le 1.
\)

If $\eta_j=0$, then
\(
\|\mS_j\|_2=p\|\mR\|_2=p\le 1.
\)
Therefore,
\(
\|\mS_j\|_2\le 1
\)
almost surely. Next, using
\(
\mA_j\mA_j^\top=\mI_m,
\)
\(
\mA_j\mR=\mR,
\mR\mA_j^\top=\mR,
\)
\(
\mR^2=\mR,
\)
we obtain
\(
\mS_j\mS_j^\top
=
\eta_j\mI_m-2p\eta_j\mR+p^2\mR.
\)
Taking expectations gives
\(
\EE[\mS_j\mS_j^\top]
=
p\mI_m-p^2\mR
=
p(\mI_m-p\mR).
\)
The same calculation, using
\(
\mA_j^\top\mR=\mR
\)
and
\(
\mR\mA_j=\mR,
\)
also gives
\(
\EE[\mS_j^\top\mS_j]
=
p(\mI_m-p\mR).
\)
Combining these identities with \(\|\mI_m-p\mR\|_2\le1\), the matrix variance statistic satisfies
\[
v(\mZ)
:=
\max\left\{
\left\|\sum_{j=1}^k\EE[\mS_j\mS_j^\top]\right\|_2,
\left\|\sum_{j=1}^k\EE[\mS_j^\top\mS_j]\right\|_2
\right\}
\le kp.
\]
Applying the rectangular matrix Bernstein inequality \cite[Theorem~6.1.1]{tropp2015introduction} to the independent, mean-zero \(\mS_1,\dots,\mS_k \in \mathbb{R}^{m \times m}\), with \(L=1\) and variance parameter \(v(\mZ)\le kp\), yields \eqref{eq:bernstein_randomly_observed_perm}. 
\end{proof}

The next lemma controls the excess number of non-corrupted
measurements associated with correctly estimated blocks over those
associated with incorrectly estimated blocks.
\begin{lemma}[Concentration of the observed signal gap]
\label{lem:signal_gap_random_observation}
Let \(\mathcal I\) be a finite index set with
\(
|\mathcal I|=N,
\)
and let \(S\subseteq\mathcal I\) satisfy \(|S|>N/2\). For every
\(j\in\mathcal I\), let
\(
A_j\sim\operatorname{Bernoulli}(p),
\)
\(
C_j\sim\operatorname{Bernoulli}(\pi_0),
\)
and assume that all these random variables are mutually independent. Define
\(
J:=\{j\in\mathcal I:A_jC_j=1\},
\)
\(
\theta:=p\pi_0,
\)
and
\(
X:=|J\cap S|,
\)
\(
Y:=|J\setminus S|.
\)
Then, for every \(f>0\) satisfying
\(
(2|S|-N)\theta>f,
\)
we have
\[
\Prob(X-Y>f)
\ge
1-
\exp\left(
-\frac{\bigl((2|S|-N)\theta-f\bigr)^2}
{2N\theta+\frac{2}{3}\bigl((2|S|-N)\theta-f\bigr)}
\right).
\]
\end{lemma}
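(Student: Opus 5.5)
We write \(X-Y\) as a sum of independent bounded summands and apply the scalar Bernstein inequality to its lower tail. For each \(j\in\mathcal I\), set \(B_j:=A_jC_j\). By mutual independence, the \(B_j\) are i.i.d.\ \(\operatorname{Bernoulli}(\theta)\) with \(\theta=p\pi_0\). Define the signs \(s_j:=1\) for \(j\in S\) and \(s_j:=-1\) for \(j\notin S\). Then
\[
X-Y=\sum_{j\in\mathcal I}s_jB_j,
\qquad
\EE[X-Y]=\theta\bigl(|S|-(N-|S|)\bigr)=(2|S|-N)\theta=:\mu .
\]
We center by introducing \(W_j:=s_j(B_j-\theta)\). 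These are independent and mean zero, satisfy \(|W_j|\le\max(\theta,1-\theta)\le1\), and have variance \(\EE[W_j^2]=\theta(1-\theta)\le\theta\). Hence the total variance is \(\sum_j\EE[W_j^2]\le N\theta\).

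Next we rewrite the complementary event. Since \(X-Y=\mu+\sum_jW_j\), the event \(X-Y\le f\) is the same as \(\sum_j(-W_j)\ge\mu-f=:t\). By hypothesis \(t>0\).

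We then apply the one-sided Bernstein inequality to the independent mean-zero variables \(-W_j\), using the almost-sure bound \(1\) and the variance bound \(N\theta\). This gives
\[
\Prob\Bigl(\sum_j(-W_j)\ge t\Bigr)\le\exp\left(-\frac{t^2/2}{N\theta+t/3}\right)=\exp\left(-\frac{t^2}{2N\theta+\tfrac{2}{3}t}\right).
\]
Substituting \(t=(2|S|-N)\theta-f\) and taking complements yields the claim. The inequality comes out with \(X-Y>f\) on the left, because its complement is exactly \(X-Y\le f\).

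There is no real obstacle here. The only points needing care are the following:
\begin{itemize}
\item the bound \(|W_j|\le1\), which holds because \(\theta\in(0,1]\);
\item using the one-sided form of Bernstein, without a factor \(2\), so that the stated constant is matched exactly;
\item observing that the hypothesis \(|S|>N/2\) only guarantees \(\mu>0\), while the condition \(\mu>f\) is what makes \(t\) positive.
\end{itemize}
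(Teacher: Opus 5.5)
Your proposal is correct and follows essentially the same route as the paper's proof: write \(X-Y\) as a signed sum of independent \(\operatorname{Bernoulli}(\theta)\) variables \(B_j=A_jC_j\), center at \((2|S|-N)\theta\), bound each summand by \(1\) and the total variance by \(N\theta\), and apply the one-sided Bernstein lower-tail bound with \(t=(2|S|-N)\theta-f\) before taking complements. Your bookkeeping remarks (the \(\max(\theta,1-\theta)\le1\) bound, the absence of a factor \(2\), and the role of \(\mu>f\) in making \(t\) positive) match the paper's argument exactly.
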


\begin{proof}
Let $B_j:=A_jC_j$. Then the variables $\{B_j:j\in\mathcal I\}$ are independent
Bernoulli$(\theta)$ random variables. Setting $Z:=X-Y$, we have
\(
Z
=
\sum_{j\in S}B_j-\sum_{j\in\mathcal I\setminus S}B_j
\)
and hence
\(
\EE[Z]
=
|S|\theta-(N-|S|)\theta
=
(2|S|-N)\theta.
\)
Moreover,
\(
Z-\EE[Z]
=
\sum_{j\in S}(B_j-\theta)
-
\sum_{j\in\mathcal I\setminus S}(B_j-\theta).
\)
The summands in this expression are independent, mean-zero, bounded in
absolute value by $1$, and their total variance is
\(
\sum_{j\in\mathcal I}\Var(B_j)
=
N\theta(1-\theta)
\le N\theta.
\)
Therefore, Bernstein's inequality gives, for every $t>0$,
\(
\Prob\bigl(Z\le\EE[Z]-t\bigr)
\le
\exp\left(
-{t^2}/{(2N\theta+{2}t/{3})}
\right).
\)
Taking
\(
t = (2|S| - N) \theta - f > 0
\)
and then taking complements completes the proof.
\end{proof}

\begin{lemma}[Spectral norm of centered randomly observed blocks]
\label{lem:sparse_noise_spectral_norm}
Assume that \(m=n^{O(1)}\). For \(1\le i<j\le n\), let
\(
A_{ij}\sim\operatorname{Bernoulli}(p)
\)
be independent observation indicators. Let
\(\mY_{ij}\in\mathbb R^{m\times m}\) be mutually independent random
matrices that are independent of the observation indicators and satisfy
\(
\|\mY_{ij}\|_2\le1,
\)
\(
\EE[\mY_{ij}]=\boldsymbol{\mu},
\)
where the common mean \(\boldsymbol{\mu}\) is symmetric.
Set
\(
\mL_{ij}:=A_{ij}\mY_{ij},
\)
\(
\mL_{ji}:=\mL_{ij}^{\top},
\)
and assume that the diagonal blocks \(\mL_{ii}\) are deterministic.
Let
\(
\overline{\mL}:=\EE[\mL].
\)
If
\(
np\ge C_0\log n
\)
for a sufficiently large absolute constant \(C_0>0\), then, for every
fixed \(\beta>0\), there exists a constant \(C_\beta>0\), independent of \(n\),
such that
\(
\|\mL-\overline{\mL}\|_2
\le
C_\beta\sqrt{np}
\)
with probability at least \(1-O(n^{-\beta})\).
\end{lemma}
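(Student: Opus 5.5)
We need to bound \(\|\mL-\overline{\mL}\|_2\) for a symmetric \(nm\times nm\) random block matrix with independent blocks above the diagonal and sparse Bernoulli(\(p\)) support. The diagonal blocks are deterministic, so they cancel in \(\mL-\overline{\mL}\). Writing \(\mW:=\mL-\overline{\mL}\), the off-diagonal blocks are \(\mW_{ij}=A_{ij}\mY_{ij}-p\boldsymbol{\mu}\) for \(i<j\), with \(\mW_{ji}=\mW_{ij}^\top\). Since \(\boldsymbol{\mu}\) is symmetric, \(\overline{\mL}_{ji}=p\boldsymbol{\mu}^\top=p\boldsymbol{\mu}\), so \(\mW\) is symmetric with independent mean-zero blocks. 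Each block satisfies \(\|\mW_{ij}\|_2\le 1+p\le2\), because \(\|\boldsymbol{\mu}\|_2\le\EE\|\mY_{ij}\|_2\le1\).

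The naive approach is matrix Bernstein applied to \(\mW=\sum_{i<j}(\mE_{ij}\otimes\mW_{ij}+\mE_{ji}\otimes\mW_{ij}^\top)\).

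\begin{itemize}
\item The variance proxy \(\|\sum\EE[\mW_{ij}\mW_{ij}^\top]\|\) (suitably placed) is at most \(2np\), since \(\EE\|\mW_{ij}\|^2\le 2p+p^2\cdot(\ldots)\lesssim p\).
\item This gives \(\|\mW\|_2\lesssim\sqrt{np\log(nm)}+\log(nm)\).
\end{itemize}

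That bound loses a \(\sqrt{\log n}\) factor, so the sharp \(\sqrt{np}\) claim requires a dimension-free argument. My plan is therefore to invoke a sharp nonasymptotic bound for random matrices with independent bounded entries or blocks: the Bandeira--van Handel inequality, or its block/“matrix-coefficient” version of Bandeira--Boedihardjo--van Handel. The block version states that
\[
\EE\|\mW\|\lesssim\sigma+\sigma_*\cdot(\text{log factors})+R\,(\log(nm))^{c},
\]
where \(\sigma^2=\|\EE\mW^2\|\lesssim np\), and \(R\le2\) is the almost-sure bound on the blocks. The assumption \(m=n^{O(1)}\) ensures \(\log(nm)=O(\log n)\).

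The scalar alternative is cleaner, so I would actually use it. The Bandeira--van Handel theorem, applied to the \(nm\times nm\) symmetric matrix with independent entries, uses the independence structure as a sum of independent self-adjoint blocks \(\mX_{ij}\). Latała-type or BvH bounds for independent entries do not directly apply, because entries within a block are dependent. The correct tool is the result of Bandeira--Boedihardjo--van Handel ("Matrix concentration inequalities and free probability") for sums \(\sum_{i<j}\mX_{ij}\) of independent bounded random matrices. It gives
\[
\|\mW\|\le \|\mW_{\rm free}\|+C\bigl(v^{1/2}\sigma^{1/2}(\log d)^{3/4}+R\log d\bigr),\qquad d=nm,
\]
with \(\|\mW_{\rm free}\|\le2\sigma\) and weak variance \(v\le\sqrt{p}\). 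Its tail version holds with probability \(1-e^{-t}\) by adding \(\sigma_*\sqrt t+Rt\), where \(\sigma_*\le\sqrt p\) (well, \(\lesssim1\)).

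Plugging in the parameters:
\begin{itemize}
\item \(\sigma\asymp\sqrt{np}\);
\item \(v\lesssim \sqrt p\), since a single block's covariance is small;
\item \(R\le2\);
\item \(\log d\lesssim\log n\).
\end{itemize}
This yields \(\|\mW\|\lesssim\sqrt{np}+p^{1/4}(np)^{1/4}(\log n)^{3/4}+\log n+\sqrt{\beta\log n}+\beta\log n\). Choosing \(t=\beta\log n\) and using \(np\ge C_0\log n\), each term after the first is \(\le C'_\beta\sqrt{np}\). For example, \(\log n\le\sqrt{np\log n/C_0}\le\sqrt{np}\), and the middle term is at most \((np)^{1/4}(\log n)^{3/4}\le\sqrt{np}\). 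This gives the claim.

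The main obstacle is verifying the weak-variance parameter \(v=\sup_{\|u\|=\|w\|=1}\EE\langle u,\mW w\rangle^2\)-type quantity (actually \(\|\mathrm{Cov}(\mW)\|\)) for dependent entries within a block. I would bound it as \(\|\mathrm{Cov}\|\le\max_{i<j}\sup_{\|\mM\|_F=1}\EE\langle\mM,\mW_{ij}\rangle^2\le \max\EE\|\mW_{ij}\|_F^2\)-free bound, namely \(\lesssim p\cdot\sup\langle\mM,\mY\rangle^2\le p\|\mY\|_2^2\cdot m\)? That estimate is too crude. If it fails, the fallback is a symmetrization and moment argument on \(\EE\tr\mW^{2k}\) with \(k\asymp\log n\), exploiting \(\|\mW_{ij}\|\le2\) blockwise as in the standard sparse Erdős–Rényi proofs (Feige–Ofek / Hajek–Wu–Xu style). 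That route gives \(\EE\tr\mW^{2k}\le nm\,(C\sqrt{np})^{2k}\) when \(np\gtrsim\log n\), and Markov's inequality then yields the \(n^{-\beta}\) tail, with the factor \(m=n^{O(1)}\) absorbed.
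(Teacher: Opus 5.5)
Your main route cannot reach $\sqrt{np}$ at the degree scale the lemma assumes. The free-probability bound you quote improves on matrix Bernstein only in the $\sigma$-term. It keeps the term $R\log d$ with $R\asymp1$ and $d=nm=n^{O(1)}$, and its tail version adds $Rt$ with $t=\beta\log n$. Under the sole hypothesis $np\ge C_0\log n$ these terms are not $O(\sqrt{np})$: at $np=C_0\log n$ one has $\log n/\sqrt{np}=\sqrt{\log n/C_0}\to\infty$. Your chain $\log n\le\sqrt{np\log n/C_0}\le\sqrt{np}$ fails at the second step, which would need $\log n\le C_0$. So the argument gives $\|\mW\|_2=O(\sqrt{np})$ only when $np\gtrsim\log^2 n$. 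At $np\asymp\log n$, the regime the lemma is stated for, it is no better than Bernstein. The Erd\H{o}s--R\'enyi structure has to be used directly.

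Separately, the bound $v\lesssim\sqrt p$ is false in this generality. Take $\mY_{ij}=\varepsilon_{ij}\mI_m$ with Rademacher $\varepsilon_{ij}$, which satisfies the hypotheses. Testing $\mathrm{Cov}(\mW)$ against $\mI_m/\sqrt m$ placed in block $(i,j)$ gives variance $pm$, so $v\gtrsim\sqrt{pm}$. The gauged blocks $C_{ij}\mI_m+(1-C_{ij})\boldsymbol{V}_{ij}$ used in Theorem~\ref{th:spectral_initialization} give variance of order $p\pi_0(1-p\pi_0)m$ in the same direction. Then $v^{1/2}\sigma^{1/2}(\log d)^{3/4}\gtrsim p^{1/2}(nm)^{1/4}(\log n)^{3/4}$, which exceeds $\sqrt{np}$ once $m\gtrsim n/\log^3 n$, a range that $m=n^{O(1)}$ allows. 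You flagged this obstacle but did not resolve it. Your fallback estimate $\EE\tr\mW^{2k}\le nm(C\sqrt{np})^{2k}$ for $k\asymp\log n$ at $np\asymp\log n$ is exactly the delicate sparse-graph bound, and you state it without proof.

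The paper avoids generic matrix concentration. It writes $\mL-\overline{\mL}=\mM^{(1)}+\mM^{(2)}$ with $\mM^{(1)}_{ij}=A_{ij}(\mY_{ij}-\boldsymbol{\mu})$ and $\mM^{(2)}=\mH\otimes\boldsymbol{\mu}$, where $\mH$ is the centered adjacency matrix. The Kronecker structure gives $\|\mM^{(2)}\|_2=\|\mH\|_2\|\boldsymbol{\mu}\|_2\le\|\mH\|_2$, which is $O(\sqrt{np})$ by Lei--Rinaldo~\cite{lei2015consistency}. The blocks of $\mM^{(1)}$ vanish off the observed graph, and this part is handled by Lemma~5.6 of~\cite{chen2016projected}, which uses Talagrand's inequality together with Chernoff and union bounds on the row sparsities; the failure exponent is re-tuned to $\beta$. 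Both tools are designed for the regime $np\gtrsim\log n$ and incur no $\log d$ loss.
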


\begin{proof}
By Jensen's inequality,
\(
\|\boldsymbol{\mu}\|_2
=
\|\EE[\mY_{ij}]\|_2
\le
\EE[\|\mY_{ij}\|_2]
\le1.
\)
Decompose
\(
\mL-\overline{\mL}
=
\mM^{(1)}+\mM^{(2)},
\)
where, for \(i<j\),
\(
\mM^{(1)}_{ij}
:=
A_{ij}(\mY_{ij}-\boldsymbol{\mu}),
\)
\(
\mM^{(2)}_{ij}
:=
(A_{ij}-p)\boldsymbol{\mu}.
\)
The lower-triangular blocks are defined by transposition, and the
diagonal blocks are zero.

The upper-triangular blocks of \(\mM^{(1)}\) are independent and
mean zero. Moreover, each block is zero whenever \(A_{ij}=0\), and
\(
\|\mY_{ij}-\boldsymbol{\mu}\|_2\le2.
\)
Lemma~5.6 of~\cite{chen2016projected} states the corresponding estimate 
with failure probability \(O(n^{-10})\). The exponent \(10\) is not
intrinsic to its proof. Indeed, Appendix~E of that paper obtains the tail
from Talagrand's inequality in the form
\(
C\exp(-c\lambda^2)
\)
together with a Chernoff bound and a union bound for the row sparsities.
For a prescribed \(\beta>0\), choosing
\(
\lambda\asymp\sqrt{(\beta+2)\log n}
\)
and adjusting the constants in those bounds gives
\(
\|\mM^{(1)}\|_2
\le
C_\beta\sqrt{np}
\)
with probability at least \(1-O(n^{-\beta})\), provided
\(np\ge C_0\log n\).

Let \(\mH\in\mathbb R^{n\times n}\) be the centered adjacency matrix
defined by
\(
H_{ij}:=A_{ij}-p,
\)
\(
i\neq j,
\)
\(
H_{ii}:=0.
\)
Since \(\boldsymbol{\mu}\) is symmetric,
\(
\mM^{(2)}
=
\mH\otimes\boldsymbol{\mu}.
\)
Applying Theorem~5.2 of~\cite{lei2015consistency} with its probability
parameter chosen larger than \(\beta\) gives
\(
\|\mH\|_2
\le
C_\beta\sqrt{np}
\)
with probability at least \(1-O(n^{-\beta})\). Hence,
\(
\|\mM^{(2)}\|_2
=
\|\mH\|_2\|\boldsymbol{\mu}\|_2
\le
C_\beta\sqrt{np}.
\)
The conclusion follows from the triangle inequality and a union bound.
\end{proof}

\begin{lemma}[Concentration for read-\(2\)]
\label{lem:read_two_concentration}
Let \(Z_1,\dots,Z_n\) be a read-\(2\) family of Bernoulli random
variables, and set
\(
q:=\sum_{i=1}^n\EE[Z_i]/n.
\)
If \(0<q<1\), then, for every \(r\in(q,1)\),
\[
\Prob\left(
\frac1n\sum_{i=1}^n Z_i\ge r
\right)
\le
\exp\left(
-\frac n2D_{\mathrm{KL}}(r\Vert q)
\right),
\]
where
\(
D_{\mathrm{KL}}(x\Vert y)
:=
x\log(x/y)+(1-x)\log((1-x)/(1-y)).
\)
Moreover, for every fixed \(\beta>0\), there exists a constant \(a_\beta>0\)
such that, for \(n\ge2\),
\[
\Prob\left(
\frac1n\sum_{i=1}^n Z_i
>
8q+a_\beta\frac{\log n}{n}
\right)
\le
n^{-\beta}.
\]
\end{lemma}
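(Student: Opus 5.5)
The first claim is the read-\(k\) Chernoff bound of Gavinsky, Lovett, Saks and Srinivasan~\cite{gavinsky2015tail}, applied with \(k=2\). I would quote their theorem directly: for a read-\(k\) family of \(\{0,1\}\)-valued variables with average mean \(q\), and for every \(r\in(q,1)\),
\[
\Prob\Bigl(\tfrac1n\textstyle\sum_i Z_i\ge r\Bigr)\le\exp\bigl(-n D_{\mathrm{KL}}(r\Vert q)/k\bigr).
\]
Taking \(k=2\) gives the first display immediately, so no new argument is needed there beyond checking that their hypotheses match the definition of a read-\(k\) family given above.

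For the second claim, set \(r:=8q+a_\beta\log n/n\). If \(r\ge1\), the event \(\frac1n\sum_i Z_i>r\) is empty and there is nothing to prove. Otherwise \(r\in(q,1)\), so the first part applies, and the remaining task is a lower bound on \(D_{\mathrm{KL}}(r\Vert q)\) that is linear in \(r\) whenever \(r\ge 8q\).

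I would write
\[
D_{\mathrm{KL}}(r\Vert q)=r\log(r/q)+(1-r)\log\frac{1-r}{1-q}.
\]
The second term is controlled using \(\log x\ge 1-1/x\), which gives
\[
(1-r)\log\frac{1-r}{1-q}\ge (1-r)\Bigl(1-\frac{1-q}{1-r}\Bigr)=q-r.
\]
Hence
\[
D_{\mathrm{KL}}(r\Vert q)\ge r\log(r/q)-r+q\ge r\bigl(\log 8-1\bigr),
\]
using \(r/q\ge8\) and \(q\ge0\). Since \(\log 8-1>1\), this gives \(D_{\mathrm{KL}}(r\Vert q)\ge r\). (This is why the constant \(8\) appears: any factor \(c\) with \(\log c>1\) would work.) The tail bound then becomes
\[
\exp(-nr/2)\le \exp\bigl(-a_\beta\log n/2\bigr)=n^{-a_\beta/2},
\]
so the choice \(a_\beta:=2\beta\) suffices.

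Two edge cases need checking. The first is the strict inequality: the first part bounds \(\Prob(\cdot\ge r)\), which dominates \(\Prob(\cdot>r)\), so the direction is fine. The second is the hypothesis \(0<q<1\), which the statement assumes; the degenerate cases \(q=0\) (all \(Z_i=0\) almost surely) and \(q=1\) are trivial anyway.

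The only genuine obstacle is the elementary KL lower bound \(D_{\mathrm{KL}}(r\Vert q)\ge r\) for \(r\ge 8q\). The one-line inequality \(\log x\ge1-1/x\) handles it.
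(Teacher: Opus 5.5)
Your proposal is correct and matches the paper's proof: quote Theorem~1.1 of Gavinsky et al.\ with $k=2$, set $r=8q+a_\beta\log n/n$, dispose of the trivial cases, and bound $D_{\mathrm{KL}}(r\Vert q)\ge r\log(r/q)-r+q\ge r(\log 8-1)$. The only difference is the constant: you use $\log 8-1>1$ to take $a_\beta=2\beta$, whereas the paper takes $a_\beta=2\beta/(\log 8-1)$. Your aside that any $c$ with $\log c>1$ works is true only with $a_\beta=2\beta/(\log c-1)$, the paper's form; your bound $D_{\mathrm{KL}}(r\Vert q)\ge r$ needs $c>e^2$.
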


\begin{proof}
The first assertion is Theorem~1.1 of~\cite{gavinsky2015tail} with
\(k=2\). For the second assertion, set
\(
a_\beta:={2\beta}/{(\log8-1)}
\)
and
\(
r:=8q+a_\beta{\log n}/{n}.
\)
The result is immediate if \(q=0\) or \(r\ge1\). Otherwise,
\(0<q<r<1\), \(r/q\ge8\), and
\(
D_{\mathrm{KL}}(r\Vert q)
\ge
r\log(r/q)-r+q
\ge
r(\log8-1).
\)
Consequently,
\(
\frac n2D_{\mathrm{KL}}(r\Vert q)
\ge
\frac{a_\beta}{2}(\log8-1)\log n
=
\beta\log n.
\)
The first assertion therefore gives the claimed probability bound.
\end{proof}


\subsection{One-step recovery under independent estimates}
\label{sec:one_step_recovery}

We analyze one projected power iteration applied to an estimate that is
independent of the measurement matrix. This independence allows us to
condition on the current estimate and separately control the observed signal
gap and the corrupted contribution. We prove blockwise exact recovery under
the positive-overlap condition. A simultaneous recovery statement then follows
by a union bound.

\begin{theorem}[Per-block one-step recovery under an independent estimate]
\label{th:main_th}
Let \(\mP_i,\mQ_i\in\Perm(m)\), 
\(
\bvP=[\mP_1^\top,\ldots,\mP_n^\top]^\top,
\)
and
\(
\bvQ=[\mQ_1^\top,\ldots,\mQ_n^\top]^\top.
\)
Suppose that
Assumptions~\ref{ass:one_step_scaling},
\ref{ass:positive_overlap}, and
\ref{ass:independent_estimates} hold. In accordance with
Assumption~\ref{ass:positive_overlap}, take \(\bvQ\) to be the aligned
representative, so that
\(
\delta(\bvP,\bvQ)=\delta_{\mathrm{glob}}(\bvP,\bvQ).
\)
Define
\(
\mR_i
:=
\Proj\bigl((\mL\bvQ)_i\bigr),
\quad
1\le i\le n.
\)
Then there exist constants \(C,c>0\), independent of \(n\) but possibly
depending on the fixed overlap margin \(\epsilon\), such that, for all
sufficiently large \(n\) and every \(1\le i\le n\),
\(
\Prob(\mR_i=\mP_i)
\ge
1
-
C\exp\bigl(-cnp\pi_0\bigr)
-
C\exp\bigl(-cnp\pi_0^2\bigr).
\)

\end{theorem}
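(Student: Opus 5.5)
Fix a block index \(i\) and condition on \(\bvQ\); by Assumption~\ref{ass:independent_estimates}, \(\bvQ\) may be treated as deterministic. Let \(S:=\{j\ne i:\mQ_j=\mP_j\}\) and \(N:=n-1\). Since \(\delta(\bvP,\bvQ)\le0.5-\epsilon\), we have \(2|S|-N\ge 2\epsilon n-1\). It is convenient to work in the gauge of \(\mP_i\), and so we study
\[
\mM:=\mP_i^\top(\mL\bvQ)_i=\sum_{j\ne i}\mP_i^\top\mL_{ij}\mQ_j .
\]
Then \(\mR_i=\mP_i\) exactly when \(\mI_m\) is the unique maximizer of \(\tr(\mQ^\top\mM)\) over \(\Perm(m)\) (up to the tie-breaking rule, which we avoid by proving a strict margin).

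We split \(\mM=\mM_{\mathrm{sig}}+\mM_{\mathrm{cor}}\) into uncorrupted and corrupted parts. The uncorrupted part is
\[
\mM_{\mathrm{sig}}=\sum_{j\in J}\mP_j^\top\mQ_j,
\qquad J=\{j:A_{ij}C_{ij}=1\}.
\]
For \(j\in S\) the summand equals \(\mI_m\), and for \(j\notin S\) it is some permutation matrix \(\mW_j\). For any permutation \(\mQ\ne\mI_m\), we have \(\tr(\mQ^\top\mI_m)\le m-2\) and \(\tr(\mQ^\top\mW_j)-\tr(\mW_j)\le m\). Hence
\[
\tr(\mM_{\mathrm{sig}})-\tr(\mQ^\top\mM_{\mathrm{sig}})\ge 2X-mY,
\]
which is too crude. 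A better route is to compare entries. Since \(\tr(\mM)-\tr(\mQ^\top\mM)=\sum_k(\mM_{kk}-\mM_{k\sigma(k)})\), it suffices to show a row-diagonal dominance: \(\mM_{kk}>\mM_{kl}\) for all \(l\ne k\) already makes \(\mI_m\) the unique maximizer. Entrywise, \((\mM_{\mathrm{sig}})_{kk}\ge X\) and \((\mM_{\mathrm{sig}})_{kl}\le Y\) for \(l\ne k\), so the signal gap is at least \(X-Y\). Lemma~\ref{lem:signal_gap_random_observation} with \(f=\epsilon np\pi_0\) gives \(X-Y>\epsilon np\pi_0\) with probability at least \(1-\exp(-c\,np\pi_0)\). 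Here we use \((2|S|-N)\theta\ge(2\epsilon n-1)p\pi_0\), together with \(N\theta\lesssim np\pi_0\) in the denominator.

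For the corrupted part, note that \(\mP_i^\top\mU_{ij}\mQ_j\) is again a permutation-valued matrix with mean \(m^{-1}\vone\vone^\top\), since the mean is invariant under left and right multiplication by permutations. These matrices are independent across \(j\) and independent of the \(A_{ij}\), \(C_{ij}\). Let \(\eta_j:=A_{ij}(1-C_{ij})\sim\operatorname{Bernoulli}(p(1-\pi_0))\). Then
\[
\mM_{\mathrm{cor}}-\tfrac{Np(1-\pi_0)}{m}\vone\vone^\top
\]
is exactly the matrix \(\mZ\) of Lemma~\ref{lem:bernstein_randomly_observed_perm}. The constant shift \(c\vone\vone^\top\) does not affect the difference \(\mM_{kk}-\mM_{kl}\). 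Moreover, \(|\mZ_{kk}-\mZ_{kl}|\le 2\|\mZ\|_2\). Taking \(t=\epsilon np\pi_0/4\) and using \(kp\le np\), the lemma gives
\[
\Prob(\|\mZ\|_2\ge t)\le 2m\exp\!\bigl(-c\,np\pi_0^2\bigr).
\]

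It remains to absorb the factor \(m\), and this is the step requiring care. By Assumption~\ref{ass:one_step_scaling}, \(\log m=o(np\pi_0^2)\). So for large \(n\) we have \(2m\exp(-c\,np\pi_0^2)\le 2\exp(-(c/2)\,np\pi_0^2)\), which yields the claimed form with adjusted constants.

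On the intersection of the two good events,
\[
\mM_{kk}-\mM_{kl}\ge X-Y-2\|\mZ\|_2>\epsilon np\pi_0/2>0
\]
for all \(k\ne l\). Hence \(\mI_m\) is the unique assignment maximizer, and \(\mR_i=\mP_i\). Finally, we uncondition on \(\bvQ\): the bounds are uniform in \(\bvQ\) and in \(i\), so they hold unconditionally.
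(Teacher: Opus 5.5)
Your proof is correct and follows the paper's argument. You use the same split of the block score into an uncorrupted part, whose advantage for the true assignment is controlled by \(X-Y\), and a centered corrupted part. You apply the same two concentration tools, Lemma~\ref{lem:signal_gap_random_observation} with a threshold of order \(np\pi_0\) and the matrix Bernstein bound of Lemma~\ref{lem:bernstein_randomly_observed_perm}, and you absorb the factor \(m\) through \(\log m=o(np\pi_0^2)\) exactly as the paper does.

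The only difference is the deterministic certificate. You check row-wise diagonal dominance of the gauged score \(\mP_i^\top(\mL\bvQ)_i\), using \(|\mZ_{kl}|\le\|\mZ\|_2\). The paper instead shows \(\langle(\mL\bvQ)_i,\mP_i-\mR\rangle\ge(X_i-Y_i)d_i(\mR)-\sqrt{2}\,\|\widetilde{\mB}_i\|_2\,d_i(\mR)\) via \(\|\mP_i-\mR\|_*\le\sqrt{2}\,d_i(\mR)\). Both routes reduce to the same two events and give the same bound.
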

Assumption~\ref{ass:one_step_scaling} is used only to absorb the dimension
prefactor in the final step of the proof. Without this scaling assumption,
Definition~\ref{def:uniform_corruption_model} and
Assumptions~\ref{ass:positive_overlap} and
\ref{ass:independent_estimates} yield, for some constants
\(C,c,c_0>0\), independent of \(n\) but possibly depending on the fixed
overlap margin \(\epsilon\), and all sufficiently large \(n\),
\(
\Prob(\mR_i\neq\mP_i)
\le
C\exp(-cnp\pi_0)+2m\exp(-c_0np\pi_0^2).
\)
Consequently, the per-block success probability tends to one whenever
\(
m\exp(-c_0np\pi_0^2)\to0.
\)
Indeed, this condition also implies \(np\pi_0^2\to\infty\), so the first
exponential term vanishes as well. The stronger condition
\(
\log m=o(np\pi_0^2)
\)
is a convenient sufficient condition that absorbs the factor \(m\) and gives
the cleaner dimension-free exponential bound stated in the theorem.

\begin{proof}
The alignment in Assumption~\ref{ass:positive_overlap} depends only on \(\bvQ\), so it preserves the independence in Assumption~\ref{ass:independent_estimates}. Fix \(i\in[n]\) and define
\(
\mathcal I_i:=[n]\setminus\{i\}.
\)
For \(j\in\mathcal I_i\), let \(A_{ij}\) and \(C_{ij}\) denote the
observation and non-corruption indicators, respectively. For \(j<i\), we use
the symmetric conventions
\(
A_{ij}:=A_{ji},
\)
\(
C_{ij}:=C_{ji},
\)
\(
\mU_{ij}:=\mU_{ji}^{\top}.
\)
Thus, for every \(j\in\mathcal I_i\),
\(
A_{ij}\sim\operatorname{Bernoulli}(p),
\)
\(
C_{ij}\sim\operatorname{Bernoulli}(\pi_0),
\)
and
\(
\mL_{ij}
=
A_{ij}
\left(
C_{ij}\mP_i\mP_j^\top
+
(1-C_{ij})\mU_{ij}
\right).
\)

Define
\(
J_i
:=
\{j\in\mathcal I_i:A_{ij}C_{ij}=1\}
\)
and
\(
K_i
:=
\{j\in\mathcal I_i:A_{ij}(1-C_{ij})=1\}.
\)
Thus, \(J_i\) contains the observed non-corrupted indices, whereas
\(K_i\) contains the observed corrupted indices. Notice that \(J_i\)
is defined using the latent non-corruption indicators rather than by the
equality \(\mL_{ij}=\mP_i\mP_j^\top\), since a uniformly random corrupted
permutation can coincide with the true permutation with positive probability.

Since \(\mL_{ii}=\boldsymbol{0}_{m\times m}\), 
\(
(\mL\bvQ)_i
=
\sum_{j\in\mathcal I_i}\mL_{ij}\mQ_j,
\)
so 
\(
(\mL\bvQ)_i
=
\mP_i\sum_{j\in J_i}\mP_j^\top\mQ_j
+
\sum_{j\in K_i}\mU_{ij}\mQ_j.
\)
Let
\(
S
:=
\{j\in[n]:\mQ_j=\mP_j\},
\)
\(
S_i:=S\cap\mathcal I_i.
\)
Separating the correctly and incorrectly estimated indices in \(J_i\)
gives
\begin{equation}
(\mL\bvQ)_i
=
|J_i\cap S_i|\mP_i
+
\mP_i\sum_{j\in J_i\setminus S_i}\mP_j^\top\mQ_j
+
\sum_{j\in K_i}\mU_{ij}\mQ_j.
\label{eq:mgm_i_block_3}
\end{equation}

\noindent\textbf{Intuition behind the proof.}
The first term in~\eqref{eq:mgm_i_block_3} is the true signal produced
by observed non-corrupted measurements connected to correctly estimated
indices. The second term contains observed non-corrupted measurements
connected to incorrectly estimated indices. The third term is the
sparse noise produced by observed corrupted measurements.

Set
\(
a:= 0.5{\epsilon},
\)
\(
f(n):=anp\pi_0.
\)
Thus, \(f(n)\) is a fixed positive fraction of the signal scale. We will
show that the observed non-corrupted contribution has projection gap larger
than \(f(n)d_i(\mR)\), uniformly over all competing permutations, while the
centered corrupted contribution is smaller than this gap.

\noindent\textbf{Step 1: Decomposition and centering.}
Define
\(
\mA_i
:=
|J_i\cap S_i|\mP_i
+
\mP_i\sum_{j\in J_i\setminus S_i}
\mP_j^\top\mQ_j
\)
and
\(
\mB_i
:=
\sum_{j\in K_i}\mU_{ij}\mQ_j.
\)
Then
\(
(\mL\bvQ)_i=\mA_i+\mB_i.
\)
Let
\(
\rho:=p(1-\pi_0).
\)
For each \(j\), the corrupted contribution is present with probability
\(\rho\), and
\(
\EE\left[
A_{ij}(1-C_{ij})\mU_{ij}\mQ_j
\,\middle|\,\bvQ
\right]
=
\frac{\rho}{m}\vone\vone^\top.
\)
Define the centered corrupted contribution
\(
\widetilde{\mB}_i
:=
\mB_i-\frac{(n-1)\rho}{m}\vone\vone^\top.
\)

\noindent\textbf{Step 2: A sufficient signal-gap condition.}
Set
$
X_i:=|J_i\cap S_i|,
$
$
Y_i:=|J_i\setminus S_i|.
$
We first establish a deterministic lower bound on the projection gap
of $\mA_i.$ Fix $\mR\in\Perm(m)$ with $\mR\neq\mP_i,$ and define
$
d_i(\mR)
:=
\langle\mP_i,\mP_i-\mR\rangle
=
m-\langle\mP_i,\mR\rangle.
$
Because two distinct permutation matrices differ in at least two rows,
$
d_i(\mR)\ge2.
$
For every \(\mV\in\Perm(m)\),
$
\langle\mV,\mP_i-\mR\rangle
\ge
-d_i(\mR).
$
For \(j\in J_i\setminus S_i\), the matrix
$
\mV_{ij}:=\mP_i\mP_j^\top\mQ_j
$
is a permutation matrix. Consequently,
\[
\langle\mA_i,\mP_i-\mR\rangle
=
X_i d_i(\mR)
+
\sum_{j\in J_i\setminus S_i}
\langle\mV_{ij},\mP_i-\mR\rangle
\ge
X_i d_i(\mR)-Y_i d_i(\mR)
=
(X_i-Y_i)d_i(\mR).
\]
It follows that, on the event
\(
X_i-Y_i>f(n),
\)
we have, simultaneously for every
\(\mR\in\Perm(m)\setminus\{\mP_i\}\),
\(
\left\langle
\mA_i,\mP_i-\mR
\right\rangle
>
f(n)d_i(\mR).
\)
Since
\(
\left\langle
\vone\vone^\top,\mP_i-\mR
\right\rangle
=0,
\)
we also have
\(
\left\langle
(\mL\bvQ)_i,\mP_i-\mR
\right\rangle
=
\left\langle
\mA_i,\mP_i-\mR
\right\rangle
+
\left\langle
\widetilde{\mB}_i,\mP_i-\mR
\right\rangle.
\)

\noindent\textbf{Step 3: Uniform control of the corrupted contribution.}
For \(\mR\in\Perm(m)\setminus\{\mP_i\}\), set
\(\mD_i(\mR):=\mP_i-\mR\). Since
\(
\operatorname{rank}\bigl(\mD_i(\mR)\bigr)\le d_i(\mR),
\)
\(
\|\mD_i(\mR)\|_F^2=2d_i(\mR),
\)
we have
\(
\|\mD_i(\mR)\|_*
\le
\sqrt{2}\,d_i(\mR).
\)
Consequently,
\(
\left|
\left\langle
\widetilde{\mB}_i,\mP_i-\mR
\right\rangle
\right|
\le
\sqrt{2}\|\widetilde{\mB}_i\|_2d_i(\mR)
\)
simultaneously for every \(\mR\neq\mP_i\). Thus,
\(\Proj((\mL\bvQ)_i)=\mP_i\) whenever
\(X_i-Y_i>f(n)\) and
\(\|\widetilde{\mB}_i\|_2<f(n)/\sqrt{2}\).

Conditionally on \(\bvQ\), the matrices \(\mU_{ij}\mQ_j\) remain independent
permutation matrices and satisfy
\(
\EE[\mU_{ij}\mQ_j\mid\bvQ]
=
m^{-1}\vone\vone^\top\mQ_j
=
m^{-1}\vone\vone^\top.
\)
Therefore, the matrix part of
Lemma~\ref{lem:bernstein_randomly_observed_perm}, with \(k=n-1\),
Bernoulli parameter \(\rho\), and \(t=f(n)/\sqrt{2}\), gives
\[
\Prob\left(
\|\widetilde{\mB}_i\|_2\ge\frac{f(n)}{\sqrt{2}}
\,\middle|\,\bvQ
\right)
\le
2m\exp\left(
-\frac{f(n)^2/2}
{2\bigl((n-1)\rho+f(n)/(3\sqrt{2})\bigr)}
\right).
\]
Since \(f(n)=a\,np\pi_0\), \(\rho\le p\), and
\(f(n)/(np)=a\pi_0\le a\), the exponent is bounded below by
\(c_0np\pi_0^2\), where \(c_0>0\) may depend on \(\epsilon\). Hence,
\(
\Prob\left(
\|\widetilde{\mB}_i\|_2\ge {f(n)}/{\sqrt{2}}
\,\middle|\,\bvQ
\right)
\le
2m\exp\bigl(-c_0np\pi_0^2\bigr).
\)
Assumption~\ref{ass:one_step_scaling} implies
\(\log m=o(np\pi_0^2)\). Therefore, after decreasing the exponent constant,
\(
\Prob\left(
\|\widetilde{\mB}_i\|_2\ge{f(n)}/{\sqrt{2}}
\right)
\le
C\exp\bigl(-cnp\pi_0^2\bigr).
\)

\noindent\textbf{Step 4: Concentration of the signal gap.}
Let
\(
\alpha:=1-\delta(\bvP,\bvQ),
\)
\(
\theta:=p\pi_0.
\)
Then \(|S|=\alpha n\). Since \(S_i=S\cap\mathcal I_i\),
\(
|S_i|\ge |S|-1
\)
and hence
\(
2|S_i|-(n-1)
\ge
(2\alpha-1)n-1.
\)
Assumption~\ref{ass:positive_overlap} implies
\(
2\alpha-1\ge 2\epsilon.
\)
Therefore, for all sufficiently large \(n\),
\(
2|S_i|-(n-1)\ge \epsilon n>0.
\)

By Assumption~\ref{ass:independent_estimates}, conditionally on \(\bvQ\),
the set \(S_i\) is fixed and independent of
\(\{A_{ij}C_{ij}:j\in\mathcal I_i\}\). We may therefore apply
Lemma~\ref{lem:signal_gap_random_observation} with
\(
\mathcal I=\mathcal I_i,
\)
\(
N=n-1,
\)
\(
S=S_i,
\)
\(
f=f(n).
\)
Moreover,
\(
{f(n)}/{(n\theta)}
=
a
=
0.5 \epsilon.
\)
Since
\(
2|S_i|-(n-1)\ge\epsilon n
\)
for all sufficiently large \(n\), we have
\(
\bigl(2|S_i|-(n-1)\bigr)\theta-f(n)
\ge
0.5 \epsilon n\theta.
\)
Consequently, Lemma~\ref{lem:signal_gap_random_observation} applies and
gives an exponent of order \(n\theta=np\pi_0\).
Lemma~\ref{lem:signal_gap_random_observation} and the lower bound above
then imply
\(
\Prob\left(
X_i-Y_i\le f(n)
\,\middle|\,\bvQ
\right)
\le
\exp(-cn\theta),
\)
where \(c>0\) may depend on \(\epsilon\). Since this bound is uniform over
all estimates satisfying Assumptions~\ref{ass:positive_overlap} and
\ref{ass:independent_estimates}, it also holds unconditionally:
\begin{equation}
\label{eq:signal_asymptotic_preliminary}
\Prob\left(X_i-Y_i\le f(n)\right)
\le
\exp(-cn\theta).
\end{equation}

\noindent\textbf{Conclusion.}
Combining the operator-norm estimate from Step~3 with
\eqref{eq:signal_asymptotic_preliminary}, we obtain \newline
\(
\Prob\left(
\Proj\bigl((\mL\bvQ)_i\bigr)\neq\mP_i
\right)
\le
C\exp\bigl(-cnp\pi_0\bigr)
+
C\exp\bigl(-cnp\pi_0^2\bigr).
\)
This completes the proof.
\end{proof}

\begin{corollary}[Simultaneous one-step recovery]
\label{cor:simultaneous_one_step_recovery}
Under the assumptions of Theorem~\ref{th:main_th},
\[
\Prob\left(
\mR_i=\mP_i\text{ for every }i\in[n]
\right)
\ge
1
-
Cn\exp(-cnp\pi_0)
-
Cn\exp(-cnp\pi_0^2).
\]
Consequently, since \(np\pi_0\ge np\pi_0^2\), simultaneous exact recovery
holds with probability tending to one whenever
\(
cnp\pi_0^2-\log n\to\infty.
\)
\end{corollary}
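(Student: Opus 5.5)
The plan is to apply Theorem~\ref{th:main_th} to each block and then take a union bound over the \(n\) blocks. Under the assumptions of Theorem~\ref{th:main_th}, there are constants \(C,c>0\), depending only on \(\epsilon\). For all sufficiently large \(n\) and every fixed \(i\in[n]\),
\[
\Prob(\mR_i\neq\mP_i)\le C\exp(-cnp\pi_0)+C\exp(-cnp\pi_0^2).
\]
The key point to check is that these constants do not depend on \(i\). In the proof of Theorem~\ref{th:main_th}, the bounds of Steps~3 and~4 use only \(N=n-1\), \(\rho\le p\), \(\theta=p\pi_0\), and the inequality \(2|S_i|-(n-1)\ge\epsilon n\). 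The last inequality holds uniformly in \(i\) because \(|S_i|\ge|S|-1\) for every \(i\). The threshold "sufficiently large \(n\)" comes from the same inequality and from absorbing the factor \(2m\) via Assumption~\ref{ass:one_step_scaling}; neither depends on \(i\). So one pair \((C,c)\) and one threshold work for all blocks at once.

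The union bound then gives
\[
\Prob\bigl(\exists i:\mR_i\neq\mP_i\bigr)\le\sum_{i=1}^n\Prob(\mR_i\neq\mP_i)\le Cn\exp(-cnp\pi_0)+Cn\exp(-cnp\pi_0^2).
\]
Taking complements yields the stated lower bound. No independence across blocks is needed, so the dependence between the events \(\{\mR_i=\mP_i\}\), which share edges, causes no difficulty.

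For the consequence, use \(\pi_0\le1\), which gives \(np\pi_0\ge np\pi_0^2\). Hence \(n\exp(-cnp\pi_0)\le n\exp(-cnp\pi_0^2)=\exp\bigl(-(cnp\pi_0^2-\log n)\bigr)\). The failure probability is therefore at most \(2C\exp\bigl(-(cnp\pi_0^2-\log n)\bigr)\), which tends to zero whenever \(cnp\pi_0^2-\log n\to\infty\). Here \(c\) is the same constant as in the displayed bound.

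The only step that needs care is the uniformity of the constants and of the large-\(n\) threshold over \(i\); everything else is a routine union bound.
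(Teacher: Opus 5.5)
Your proposal is correct and follows the paper's argument: apply Theorem~\ref{th:main_th} to each block \(i\in[n]\), take a union bound, and use \(np\pi_0\ge np\pi_0^2\) for the consequence. Your explicit check that the constants \(C,c\) and the large-\(n\) threshold are uniform in \(i\) is a worthwhile detail that the paper leaves implicit.
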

This follows immediately by applying Theorem~\ref{th:main_th} for every
\(i\in[n]\) and taking a union bound. The per-block theorem requires only
\(\log m=o(np\pi_0^2)\), whereas simultaneous recovery introduces the
additional \(\log n\) cost.

\subsection{Basin-Uniform Contraction for Data-Dependent Estimates}

The term uniform means that the high-probability event below is fixed
by the underlying random draw and does not depend on the estimate. On this
single event, the contraction inequality holds for every estimate in the
positive-overlap basin, including estimates selected after observing the
measurements. Consequently, once the basin is shown to be invariant, the
same event controls all PPM iterates simultaneously.
\begin{theorem}[Uniform contraction over the positive-overlap basin]
\label{th:iter_contraction_random_observation}
Let \(\mP_i\in\Perm(m)\), \(1\le i\le n\), and set
\(\bvP=[\mP_1^\top,\ldots,\mP_n^\top]^\top\). Suppose that
Assumption~\ref{ass:uniform_corruption_scaling} holds.
Assume in addition that
\(
np\ge C_0\log n
\)
for a sufficiently large absolute constant \(C_0>0\).

Fix
\(
\epsilon\in(0,0.5),
\)
\(
\beta>0.
\)
Then there exist constants \(c,C>0\), independent of \(n\) but possibly
depending on \(\epsilon\) and \(\beta\), such that, for all sufficiently large
\(n\), with probability at least
\(
1-O(n^{-\beta}),
\)
the following holds simultaneously for every block vector
\(
\bvQ^{(t)}
=
[(\mQ_1^{(t)})^\top,\ldots,(\mQ_n^{(t)})^\top]^\top,
\)
where
\(
\mQ_i^{(t)}\in\Perm(m),
\)
satisfying
\(
\delta_t
:=
\delta_{\mathrm{glob}}(\bvP,\bvQ^{(t)})
\le
0.5-\epsilon.
\)
Let
\(
\bvQ^{(t+1)}
=
\Proj\bigl(\mL\bvQ^{(t)}\bigr),
\)
where the projection is applied blockwise, and define
\(
\delta_{t+1}
:=
\delta_{\mathrm{glob}}(\bvP,\bvQ^{(t+1)}).
\)
Then
\[
\delta_{t+1}
\le
\gamma_n\delta_t+\xi_n,
\qquad
\gamma_n
:=
C\frac{m}{np\pi_0^2},
\qquad
\xi_n
:=
C\left[
\exp\bigl(-cnp\pi_0\bigr)
+
\exp\bigl(-cnp\pi_0^2\bigr)
+
\frac{\log n}{n}
\right].
\]
\end{theorem}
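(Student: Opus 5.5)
The plan is to put all the randomness into two events that do not depend on the estimate. Everything else is a deterministic argument valid for every $\bvQ$ in the basin. Write $\overline{\mL}:=\EE[\mL]$ and $\mW:=\mL-\overline{\mL}$. By Definition~\ref{def:uniform_corruption_model}, for $i\ne j$,
\[
\overline{\mL}_{ij}=p\pi_0\mP_i\mP_j^\top+\frac{p(1-\pi_0)}{m}\vone\vone^\top ,
\]
and the diagonal blocks vanish.

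\textbf{Reduction to the aligned representative.} Fix $\bvQ=\bvQ^{(t)}$ with $\delta_t\le 0.5-\epsilon$ and replace it by its aligned representative $\bvQ\mR_\star$. Since $\mL(\bvQ\mR_\star)=(\mL\bvQ)\mR_\star$ and $\Proj(\mA\mR_\star)=\Proj(\mA)\mR_\star$ whenever the maximizer is unique, we have $\delta_{t+1}\le\delta(\bvP,\Proj(\mL\bvQ\mR_\star))$ on all blocks where we certify a unique maximizer. This is the only place tie-breaking enters, and uniqueness will hold on the good blocks below. So assume $\bvQ$ is aligned, with error set $E$, $|E|=\delta_t n$.

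\textbf{Deterministic signal gap.} Split $(\mL\bvQ)_i=(\overline{\mL}\bvQ)_i+(\mW\bvP)_i+(\mW(\bvQ-\bvP))_i$. The $\vone\vone^\top$ part of $\overline{\mL}$ is orthogonal to $\mP_i-\mR$. Repeating Step~2 of the proof of Theorem~\ref{th:main_th}, with counts replaced by expectations, gives for all $\mR\ne\mP_i$
\[
\langle(\overline{\mL}\bvQ)_i,\mP_i-\mR\rangle\ge p\pi_0\bigl((1-2\delta_t)n-1\bigr)d_i(\mR)\ge \epsilon np\pi_0\, d_i(\mR)
\]
for large $n$. By the nuclear-norm bound $\|\mP_i-\mR\|_*\le\sqrt2\,d_i(\mR)$ from Step~3 of that proof, block $i$ is recovered uniquely and correctly whenever
\[
\|(\mW\bvP)_i\|_2<\tau \quad\text{and}\quad \|(\mW(\bvQ-\bvP))_i\|_2<\tau,\qquad \tau:=\epsilon np\pi_0/(2\sqrt2).
\]
Let $B_1:=\{i:\|(\mW\bvP)_i\|_2\ge\tau\}$ and $B_2(\bvQ):=\{i:\|(\mW(\bvQ-\bvP))_i\|_2\ge\tau\}$. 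Then $n\delta_{t+1}\le|B_1|+|B_2(\bvQ)|$.

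\textbf{The $\bvQ$-dependent part, $B_2$.} Apply Lemma~\ref{lem:sparse_noise_spectral_norm} with $\mY_{ij}=C_{ij}\mP_i\mP_j^\top+(1-C_{ij})\mU_{ij}$ after gauging by the $\mP_i$, so that the mean is common and symmetric. Its hypothesis $m=n^{O(1)}$ holds because $m=o(n)$ by Remark~\ref{rem:scaling_assumption_implies_limit}. The lemma gives $\|\mW\|_2\le C_\beta\sqrt{np}$ with probability $1-O(n^{-\beta})$. On this single event, for every $\bvQ$,
\[
\tau^2|B_2(\bvQ)|\le\sum_i\|(\mW(\bvQ-\bvP))_i\|_F^2\le\|\mW\|_2^2\|\bvQ-\bvP\|_F^2\le C_\beta^2np\cdot 2m\,\delta_t n .
\]
Hence $|B_2|/n\le C\,m\delta_t/(np\pi_0^2)=\gamma_n\delta_t$. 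This step is what makes the bound uniform over data-dependent estimates.

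\textbf{The $\bvQ$-free part, $B_1$.} The set $B_1$ does not depend on $\bvQ$. Write $(\mW\bvP)_i\mP_i^\top$ as a centered count of non-corrupted edges times $\mI$, plus a centered sum of corrupted permutations. A scalar Bernstein bound handles the first term and Lemma~\ref{lem:bernstein_randomly_observed_perm} the second. Together they give
\[
\Prob(i\in B_1)\le C e^{-cnp\pi_0}+2m\,e^{-c_0np\pi_0^2}\le C e^{-cnp\pi_0}+C e^{-c'np\pi_0^2},
\]
where $m$ is absorbed using $\log m\le m=o(np\pi_0^2)$. The indicator of $\{i\in B_1\}$ depends only on the independent edge variables $(A_{ij},C_{ij},\mU_{ij})$ incident to $i$. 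Each such edge is read by exactly two blocks, so the indicators form a read-$2$ family. The second part of Lemma~\ref{lem:read_two_concentration} then gives $|B_1|/n\le 8q+a_\beta\log n/n$ with probability $1-n^{-\beta}$, where $q$ is the bound above. A union bound over the two events yields $\delta_{t+1}\le\gamma_n\delta_t+\xi_n$ simultaneously for all $\bvQ$ in the basin.

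\textbf{Main obstacle.} The step I expect to need most care is the $B_1$ bound. The difficulty is checking that the read-$2$ structure is exact once the lower-triangle conventions $\mL_{ji}=\mL_{ij}^\top$ are included, and that the per-block tail is genuinely free of $\bvQ$. The gap must come from $\overline{\mL}$ and not from realized counts, or independence from $\bvQ$ is lost. The Frobenius/spectral step for $B_2$ is short, and it is where the factor $m/(np\pi_0^2)$ arises.
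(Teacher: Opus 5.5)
Your proposal is correct, but it organizes the argument differently from the paper.

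\textbf{The paper's route.} The paper keeps the score split into an uncorrupted part and a corrupted part.
For the uncorrupted part it works with the realized counts \(X_i-Y_i\). It controls a \(\bvQ\)-free degree set \(\mathcal B_{\mathrm{deg}}\) by read-\(2\) concentration. It then bounds the remaining signal failures through \(Y_i-\overline{Y}_i=((\mG-\EE[\mG])\vone_T)_i\) and a scalar spectral bound on \(\mG-\EE[\mG]\), which gives \(O(|T|/(np\pi_0^2))\) with no factor \(m\).
For the corrupted part it separates the \(\bvQ\)-free term \(\mN_i\), handled by matrix Bernstein plus read-\(2\), from the dependent term \(\mD_i=((\mW-\EE[\mW])\bvE_T)_i\), handled by a spectral bound on the centered corruption matrix. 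The factor \(m\) in \(\gamma_n\) comes only from this last term.
Thus the paper uses two spectral events and two read-\(2\) events. No gauging is needed, because \(\mG\) and the corruption-only blocks already have common means.

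\textbf{Your route.} You center all of \(\mL\) at once.
The gap comes deterministically from \(\overline{\mL}\bvQ\). A single gauged spectral bound on \(\mL-\EE[\mL]\), the same one the paper uses for the spectral initializer, controls \(\mW(\bvQ-\bvP)\). This one set absorbs both \(\mathcal B_{\mathrm{sig}}\setminus\mathcal B_{\mathrm{deg}}\) and \(\mathcal B_{\mathrm{dep}}\).
A single read-\(2\) family over the edge triples \((A_{ij},C_{ij},\mU_{ij})\) controls your \(B_1\), which merges \(\mathcal B_{\mathrm{deg}}\) and \(\mathcal B_{\mathrm{ind}}\).

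\textbf{What each approach buys.} Your argument is shorter and yields the same \(\gamma_n\) and \(\xi_n\), including the \(\epsilon^{-2}\) dependence. A union bound showing \(B_1=\varnothing\) also gives the finite-step exact-recovery corollary directly.
What you give up is the finer bookkeeping. By folding the signal fluctuations on wrongly estimated blocks into \(\|\bvQ-\bvP\|_F^2\le 2m|T|\), you pay a factor \(m\) there that the paper avoids. This is harmless only because the corrupted term already carries that factor.

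Your closing remark that the gap ``must'' come from \(\overline{\mL}\) rather than from realized counts is too strong. The paper does use realized counts and recovers uniformity by centering \(Y_i\) and applying the spectral bound to \(\vone_T\).
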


\begin{proof}
For brevity, write
\(
\theta:=p\pi_0
\)
and
\(
\rho:=p(1-\pi_0).
\)
Choose \(\mR_\star\in\Perm(m)\) that minimizes
\(\delta\bigl(\bvP,\bvQ^{(t)}\mR\bigr)\) over
\(\mR\in\Perm(m)\).
For every \(\mA\in\mathbb R^{m\times m}\),
\(\mR\in\Perm(m)\), and \(\mQ\in\Perm(m)\),
\[
\mQ\in
\operatorname*{argmax}_{\mP\in\Perm(m)}
\langle\mP,\mA\rangle
\quad\Longleftrightarrow\quad
\mQ\mR\in
\operatorname*{argmax}_{\mP\in\Perm(m)}
\langle\mP,\mA\mR\rangle.
\]
Consequently, after replacing \(\bvQ^{(t)}\) by
\(\bvQ^{(t)}\mR_\star\), we may replace the next iterate by
\(\bvQ^{(t+1)}\mR_\star\); each of its blocks remains a valid projection
of the corresponding aligned score matrix. The inequalities below
establish uniqueness for every block declared correctly recovered. We may
therefore work with these aligned representatives and define
\(
S
:=
\left\{
j\in[n]:\mQ_j^{(t)}=\mP_j
\right\},
\)
\(
T:=S^c.
\)
Thus,
\(
|T|=n\delta_t.
\)
By the assumption on \(\delta_t\),
\(
|T|
\le
\left(0.5-\epsilon\right)n.
\)

For \(1\le i<j\le n\), let \(A_{ij}\) and \(C_{ij}\) be the observation
and non-corruption indicators, respectively. For \(j<i\), use the symmetric
conventions
\(
A_{ij}:=A_{ji},
\)
\(
C_{ij}:=C_{ji},
\)
\(
\mU_{ij}:=\mU_{ji}^{\top}.
\)
Thus, for every \(i\neq j\),
\(
A_{ij}\sim\operatorname{Bernoulli}(p),
\)
\(
C_{ij}\sim\operatorname{Bernoulli}(\pi_0).
\)
Set
\(
G_{ij}:=A_{ij}C_{ij}.
\)
When \(A_{ij}=1\) and \(C_{ij}=0\), write the corrupted measurement as
\(\mU_{ij}\), and define
\(
\mW_{ij}:=A_{ij}(1-C_{ij})\mU_{ij}.
\)
We take \(G_{ii}=0\) and \(\mW_{ii}=\vzero\).

Set
\(
a:= 0.5 \epsilon,
\)
\(
f:=anp\pi_0=an\theta.
\)
Thus, \(f\) is a sufficiently small fixed fraction, depending only on the
overlap margin, of the expected non-corrupted signal scale.
Because \(\epsilon\) is fixed, this constant-fraction threshold is of the
same order as the signal. This choice yields the direct contraction scale
\(m/(np\pi_0^2)\).
For each \(i\), let
\(
X_i:=\sum_{j\in S}G_{ij},
\)
\(
Y_i:=\sum_{j\in T}G_{ij}.
\)
These are, respectively, the numbers of observed non-corrupted
measurements in row \(i\) coming from correctly and incorrectly
estimated blocks.

We first control the set
\(
\mathcal B_{\mathrm{sig}}
:=
\left\{i\in[n]:X_i-Y_i\le f\right\}.
\)
Let
\(
\nu_i:=X_i+Y_i=\sum_{j\neq i}G_{ij}
\)
and define
\(
\mathcal B_{\mathrm{deg}}
:=
\left\{
i\in[n]:
\nu_i<
\left(1-\frac{\epsilon}{2}\right)(n-1)\theta
\right\}.
\)
We note that
\(
\nu_i\sim\operatorname{Binomial}(n-1,\theta),
\)
and the Chernoff bound therefore gives
\(
\Prob(i\in\mathcal B_{\mathrm{deg}})
\le
\exp(-cn\theta).
\)
For each \(i\in[n]\), define
\(
Z_i^{\mathrm{deg}}
:=
\boldsymbol{1}_{\{i\in\mathcal B_{\mathrm{deg}}\}}.
\)
The variables
\(
Z_1^{\mathrm{deg}},\dots,Z_n^{\mathrm{deg}}
\)
form a read-\(2\) family with respect to the independent edge variables
\(
\{G_{uv}:1\le u<v\le n\},
\)
because \(G_{uv}\) can affect only
\(Z_u^{\mathrm{deg}}\) and \(Z_v^{\mathrm{deg}}\). 
Set
\(
q_{\mathrm{deg}}
:=
\frac1n\sum_{i=1}^n\EE[Z_i^{\mathrm{deg}}].
\)
The preceding pointwise bound gives
\(
q_{\mathrm{deg}}\le\exp(-cn\theta).
\)
The final assertion of
Lemma~\ref{lem:read_two_concentration} therefore gives
\[
\frac{|\mathcal B_{\mathrm{deg}}|}{n}
\le
C\left[
\exp(-cn\theta)+\frac{\log n}{n}
\right]
\text{ except on an event of probability at most } n^{-\beta}.
\]

Now suppose that \(i\notin\mathcal B_{\mathrm{deg}}\) and
\(i\in\mathcal B_{\mathrm{sig}}\). Since
\(
X_i-Y_i=\nu_i-2Y_i\le f,
\)
we have
\(
Y_i\ge\frac{\nu_i-f}{2}.
\)
Define the centering term
\(
\overline{Y}_i
:=
\theta\left(|T|-\mathbf 1_{\{i\in T\}}\right).
\)
Then
\(
\overline{Y}_i\le\theta|T|.
\)
Although \(T\) may depend on \(\mG\), the following identity holds
pointwise:
\(
Y_i-\overline{Y}_i
=
\left((\mG-\EE[\mG])\vone_T\right)_i,
\)
where \(\mG=(G_{ij})_{i,j=1}^n\).

Using
\(
|T|\le\left(0.5-\epsilon\right)n,
\)
\(
\nu_i\ge
\left(1-\frac{\epsilon}{2}\right)(n-1)\theta,
\)
and \(f=a n\theta\), we obtain, for all sufficiently large \(n\),
\[
Y_i-\overline{Y}_i
\ge
\left(
\frac{1}{2}\left(1-\frac{\epsilon}{2}\right)
-\frac{a}{2}
-\left(0.5-\epsilon\right)
-o(1)
\right)n\theta
\ge
c\epsilon n\theta,
\]
where the last inequality uses \(a=0.5\epsilon\).
Consequently,
\(
\left|
\left((\mG-\EE[\mG])\vone_T\right)_i
\right|
\ge
c\epsilon n\theta.
\)

Apply Lemma~\ref{lem:sparse_noise_spectral_norm} with block size one,
observation indicators \(A_{ij}\), and scalar blocks
\(
Y_{ij}:=C_{ij}.
\)
Since
\(
\EE[Y_{ij}]=\pi_0,
\)
the lemma gives, with probability at least \(1-O(n^{-\beta})\),
\(
\|\mG-\EE[\mG]\|_2^2\le Cnp.
\)
Therefore,
\[
\bigl|
\mathcal B_{\mathrm{sig}}
\setminus\mathcal B_{\mathrm{deg}}
\bigr|
(c\epsilon n\theta)^2
\le
\|(\mG-\EE[\mG])\vone_T\|_2^2
\le
\|\mG-\EE[\mG]\|_2^2\|\vone_T\|_2^2
\le
Cnp|T|.
\]
Since \(\theta=p\pi_0\), it follows that
\(
\left|
\mathcal B_{\mathrm{sig}}
\setminus\mathcal B_{\mathrm{deg}}
\right|
\le
C{|T|}/{(np\pi_0^2)}.
\)

We next control the corrupted contribution. Define
\[
\mE_j:=\mQ_j^{(t)}-\mP_j,
\qquad
(\bvE_T)_j
:=
\begin{cases}
\mE_j, & j\in T,\\
\vzero, & j\in S.
\end{cases}
\]
For each \(i\), write
\(
\sum_{j\neq i}\mW_{ij}\mQ_j^{(t)}
=
\rho(n-1)\mJ+\mN_i+\mD_i,
\)
\(
\mJ:=\frac1m\vone\vone^\top,
\)
where
\(
\mD_i
:=
\sum_{j\in T}\mW_{ij}\mE_j
\)
and
\(
\mN_i
:=
\sum_{j\neq i}
\left(\mW_{ij}\mP_j-\rho\mJ\right).
\)
Because
\(
\mJ\mE_j=\vzero,
\)
we may write the stacked dependent term as
\(
\bvD=(\mW-\EE[\mW])\bvE_T.
\)

Since the difference of two permutation matrices satisfies
\(
\|\mQ_j^{(t)}-\mP_j\|_F^2\le 2m,
\)
we have
\(
\|\bvE_T\|_F^2\le 2m|T|.
\)
For this application, write
\(
\mW_{ij}
=
A_{ij}\mY_{ij},
\)
\(
\mY_{ij}
:=
(1-C_{ij})\mU_{ij}.
\)
The blocks \(\mY_{ij}\) are independent of the observation indicators,
have spectral norm at most one, and have common symmetric mean
\(
\EE[\mY_{ij}]
=
(1-\pi_0)\mJ.
\)
Therefore, Lemma~\ref{lem:sparse_noise_spectral_norm} gives, with
probability at least \(1-O(n^{-\beta})\),
\(
\|\mW-\EE[\mW]\|_2^2\le Cnp.
\)
On this event,
\[
\sum_{i=1}^n\|\mD_i\|_F^2
=
\|(\mW-\EE[\mW])\bvE_T\|_F^2
\le
\|\mW-\EE[\mW]\|_2^2\|\bvE_T\|_F^2
\le
Cnp\,m|T|.
\]

Define
\(
\mathcal B_{\mathrm{dep}}
:=
\left\{
i\in[n]:
\|\mD_i\|_F\ge\frac{f}{2}
\right\}.
\)
Every \(i\in\mathcal B_{\mathrm{dep}}\) contributes at least \(f^2/4\)
to the preceding sum. Hence,
\(
|\mathcal B_{\mathrm{dep}}|
{f^2}/{4}
\le
\sum_{i=1}^n\|\mD_i\|_F^2
\le
Cnp\,m|T|.
\)
Since
\(
f^2=a^2n^2p^2\pi_0^2,
\)
we obtain
\(
|\mathcal B_{\mathrm{dep}}|
\le
C{m}|T|/(np\pi_0^2),
\)
where the constant may depend on \(\epsilon\).

It remains to control \(\mN_i\). Define
\(
\mathcal B_{\mathrm{ind}}
:=
\left\{
i\in[n]:
\|\mN_i\|_2\ge {f}/{(2\sqrt{2})}
\right\}.
\)
For every fixed \(i\), the matrices
\(\{\mW_{ij}\mP_j:j\neq i\}\) are independent. Each is zero with probability
\(1-\rho\), and, conditional on being nonzero, is a permutation matrix with
mean \(m^{-1}\vone\vone^\top\), since
\(
\EE[\mU_{ij}\mP_j]
=
m^{-1}\vone\vone^\top\mP_j
=
m^{-1}\vone\vone^\top.
\)
Therefore, the matrix part of
Lemma~\ref{lem:bernstein_randomly_observed_perm}, with \(k=n-1\),
Bernoulli parameter \(\rho\), and \(t=f/(2\sqrt{2})\), gives
\[
\Prob(i\in\mathcal B_{\mathrm{ind}})
\le
2m\exp\left(
-\frac{f^2/8}
{2\bigl((n-1)\rho+f/(6\sqrt{2})\bigr)}
\right).
\]
Since \(f=a\,np\pi_0\), \(\rho\le p\), and
\(f/(np)=a\pi_0\le a\), the exponent is bounded below by
\(c_0np\pi_0^2\). Therefore,
\(
\Prob(i\in\mathcal B_{\mathrm{ind}})
\le
2m\exp(-c_0np\pi_0^2).
\)
Assumption~\ref{ass:uniform_corruption_scaling} implies
\(\log m=o(np\pi_0^2)\). Hence, after decreasing the exponent constant,
\(
\Prob(i\in\mathcal B_{\mathrm{ind}})
\le
C\exp(-cnp\pi_0^2).
\)

For each \(i\in[n]\), define
\(
Z_i^{\mathrm{ind}}
:=
\boldsymbol{1}_{\{i\in\mathcal B_{\mathrm{ind}}\}}.
\)
The variables
\(
Z_1^{\mathrm{ind}},\dots,Z_n^{\mathrm{ind}}
\)
form a read-\(2\) family with respect to the independent unordered-edge
variables
\(
\{\mW_{uv}:1\le u<v\le n\},
\)
because \(\mW_{uv}\) can affect only
\(Z_u^{\mathrm{ind}}\) and \(Z_v^{\mathrm{ind}}\).

Set
\(
q_{\mathrm{ind}}
:=
\frac1n\sum_{i=1}^n\EE[Z_i^{\mathrm{ind}}].
\)
The preceding pointwise estimate gives
\(
q_{\mathrm{ind}}
\le
C\exp\bigl(-cnp\pi_0^2\bigr).
\)
The final assertion of
Lemma~\ref{lem:read_two_concentration} therefore gives
\[
\frac{|\mathcal B_{\mathrm{ind}}|}{n}
\le
C\left[
\exp(-cnp\pi_0^2)
+
\frac{\log n}{n}
\right]
\quad
\text{except on an event of probability at most }n^{-\beta}.
\]

We now show that every index outside
\(
\mathcal B_{\mathrm{sig}}
\cup
\mathcal B_{\mathrm{dep}}
\cup
\mathcal B_{\mathrm{ind}}
\)
is recovered correctly. Fix such an index \(i\), and let
\(\mR\in\Perm(m)\setminus\{\mP_i\}\). Since two distinct permutation
matrices differ in at least two rows,
\(
d_i(\mR)\ge2
\)
and
\(
\|\mP_i-\mR\|_F=\sqrt{2d_i(\mR)}.
\)
For every \(\mV\in\Perm(m)\),
\(
\left\langle
\mV,\mP_i-\mR
\right\rangle
\ge
-d_i(\mR).
\)
Therefore, the observed non-corrupted contribution satisfies
\[
\left\langle
\sum_{j\neq i}
G_{ij}\mP_i\mP_j^\top\mQ_j^{(t)},
\mP_i-\mR
\right\rangle
\ge
(X_i-Y_i)d_i(\mR)
>
f\,d_i(\mR).
\]

As shown in the proof of Theorem~\ref{th:main_th},
\(
\|\mP_i-\mR\|_*
\le
\sqrt{2}\,d_i(\mR).
\)
Since \(i\notin\mathcal B_{\mathrm{ind}}\), it follows that
\(
\left|
\left\langle
\mN_i,\mP_i-\mR
\right\rangle
\right|
\le
\|\mN_i\|_2\|\mP_i-\mR\|_*
<
\frac{f}{2}d_i(\mR).
\)
Since \(i\notin\mathcal B_{\mathrm{dep}}\),
\[
\left|
\left\langle
\mD_i,\mP_i-\mR
\right\rangle
\right|
\le
\|\mD_i\|_F\|\mP_i-\mR\|_F
<
\frac{f}{2}\sqrt{2d_i(\mR)}
\le
\frac{f}{2}d_i(\mR),
\]
where the last inequality uses \(d_i(\mR)\ge2\). Finally,
\(
\left\langle
\mJ,\mP_i-\mR
\right\rangle
=0.
\)
Combining these estimates gives
\(
\left\langle
(\mL\bvQ^{(t)})_i,
\mP_i-\mR
\right\rangle
>0
\)
for every \(\mR\neq\mP_i\). Hence,
\(
\mQ_i^{(t+1)}
=
\Proj\bigl((\mL\bvQ^{(t)})_i\bigr)
=
\mP_i.
\)
Since optimizing over the global alignment can only decrease the block
error, it follows that
\[
n\delta_{t+1}
\le
|\mathcal B_{\mathrm{sig}}|
+
|\mathcal B_{\mathrm{dep}}|
+
|\mathcal B_{\mathrm{ind}}|
\le
C\frac{m}{np\pi_0^2}|T|
+
Cn\left[
\exp(-cnp\pi_0)
+
\exp(-cnp\pi_0^2)
+
\frac{\log n}{n}
\right].
\]
Because \(|T|=n\delta_t\), dividing by \(n\) yields
\(
\delta_{t+1}
\le
C{m}\delta_t/(np\pi_0^2)
+
C[e^{-cnp\pi_0}+e^{-cnp\pi_0^2}+{\log n}/{n}].
\)
Combining the preceding estimates yields the claimed contraction bound.

The two read-\(2\) concentration events controlling
\(\mathcal B_{\mathrm{deg}}\) and \(\mathcal B_{\mathrm{ind}}\), together
with the two spectral-norm events, are fixed by the underlying random draw
and do not depend on the particular estimate \(\bvQ^{(t)}\). The deterministic
bounds above therefore hold uniformly over every possible error set \(T\)
and every corresponding error vector \(\bvE_T\). Hence the same event
establishes the contraction bound simultaneously for every estimate in
the positive-overlap regime. The intersection of these events has
probability at least
\(
1-O(n^{-\beta}).
\)
\end{proof}

\paragraph{Dependence on the basin margin}
The dependence on \(\epsilon\) can be tracked directly from the proof. For
fixed \(\epsilon\in(0,0.5)\), the constants may be chosen so that
\(
\gamma_n
\le
C_\beta\epsilon^{-2}{m}/{(np\pi_0^2)}
\)
and
\(
\xi_n
\le
C_\beta[
e^{-c\epsilon^2np\pi_0}
+
e^{-c\epsilon^2np\pi_0^2}
+
{\log n}/{n}
].
\)
Here \(c>0\) is absolute and \(C_\beta>0\) depends on \(\beta\), but neither
constant depends on \(\epsilon\). Thus, the contraction guarantee
deteriorates at order \(\epsilon^{-2}\) as
the \(0.5\) block-error boundary is approached. This reflects that the
expected uncorrupted score gap is of order \(\epsilon np\pi_0\). The theorem
keeps \(\epsilon\) fixed and makes no uniform claim as
\(\epsilon\downarrow0\).

\subsection{Uniform-in-Iteration Almost-Exact and Finite-Step Exact Recovery}
\label{sec:contraction_across_iterations}

Fix \(\beta>0\) and \(\epsilon\in(0,0.5)\), and let the constants below
be those from Theorem~\ref{th:iter_contraction_random_observation}.
Let
\(
\bvQ^{(k+1)}
=
\Proj\bigl(\mL\bvQ^{(k)}\bigr),
\)
\(
k\ge0,
\)
be the PPM iterates, and define
\(
\delta_k
:=
\delta_{\mathrm{glob}}(\bvP,\bvQ^{(k)}).
\)
Set
\(
\gamma_n:=C{m}/{(np\pi_0^2)}
\)
and
\(
\xi_n
:=
C[
\exp\bigl(-cnp\pi_0\bigr)
+
\exp\bigl(-cnp\pi_0^2\bigr)
+
\log n/n
].
\)
Suppose that
\(
\delta_0\le0.5-\epsilon.
\)
Assumption~\ref{ass:uniform_corruption_scaling} implies
\(
\gamma_n\to0
\)
and
\(
\xi_n\to0.
\)
Indeed, \(m/(np\pi_0^2)\to0\), and since \(m\ge2\), we have
\(np\pi_0^2\to\infty\). Moreover,
\(np\pi_0\ge np\pi_0^2\). Consequently, for all sufficiently large \(n\),
\begin{equation}
\label{eq:basin_invariance_condition}
\gamma_n\left(0.5-\epsilon\right)
+
\xi_n
\le
0.5-\epsilon.
\end{equation}
On the event in
Theorem~\ref{th:iter_contraction_random_observation}, all PPM iterates
are in the positive-overlap regime and satisfy
\begin{equation}
\label{eq:delta_recursion}
\delta_k
\le
\gamma_n\delta_{k-1}
+
\xi_n,
\qquad
k\ge1.
\end{equation}
Indeed, if \(\delta_{k-1}\le0.5-\epsilon\), then the theorem and
\eqref{eq:basin_invariance_condition} imply
\(
\delta_k
\le
\gamma_n\left(0.5-\epsilon\right)
+
\xi_n
\le
0.5-\epsilon.
\)
The claim follows by induction.

Iterating~\eqref{eq:delta_recursion} gives
\(
\delta_k
\le
\gamma_n^k\delta_0
+
\frac{1-\gamma_n^k}{1-\gamma_n}\xi_n
\)
for every \(k\ge1\). Since \(\gamma_n\to0\), we have
\(0\le\gamma_n<1\) for all sufficiently large \(n\). Consequently,
\(
\delta_k
\le
\gamma_n^k\delta_0
+
\frac{\xi_n}{1-\gamma_n},
\)
where
\(
\frac{\xi_n}{1-\gamma_n}
=
\bigl(1+o(1)\bigr)\xi_n.
\)
\paragraph{Uniform-in-iteration almost-exact recovery}
Consequently, on the same event of probability at least
\(1-O(n^{-\beta})\),
\(
\sup_{k\ge1}\delta_k
\le
\gamma_n\delta_0+{\xi_n}/{(1-\gamma_n)}
=o(1).
\)
Thus, after optimal global alignment, the fraction of incorrectly recovered
blocks tends to zero simultaneously over all iterations \(k\ge1\).
In particular, \(\delta_{k_n}\to0\) in probability for every possibly
data-dependent integer-valued sequence \(k_n\ge1\). This is stronger than a
statement for every fixed iteration index because the same event controls
the complete trajectory. It remains weaker than exact recovery:
\(n\delta_{k_n}\), the number of incorrectly recovered blocks, need not tend
to zero and may still diverge.
\begin{corollary}[Finite-step exact recovery]
\label{cor:iterative_exact_recovery}
Under the assumptions of
Theorem~\ref{th:iter_contraction_random_observation}, suppose that
\(
\delta_0\le0.5-\epsilon.
\)
Then, with probability at least
\(
1-O(n^{-\beta})
-Cn[\exp(-cnp\pi_0)+\exp(-cnp\pi_0^2)],
\)
the PPM iterates satisfy
\(
\delta_k\le\gamma_n^k\delta_0,
\)
\(
\gamma_n=C\frac{m}{np\pi_0^2},
\)
\(
k\ge0.
\)
In particular, exact recovery occurs once
\(
\gamma_n^k\delta_0<1/n.
\)
Consequently, since \(np\pi_0\ge np\pi_0^2\), exact recovery holds with
probability tending to one whenever
\(
cnp\pi_0^2-\log n\to\infty.
\)
\end{corollary}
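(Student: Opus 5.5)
Fix \(\beta\) and \(\epsilon\) and work on the intersection of the event \(\mathcal E_1\) from Theorem~\ref{th:iter_contraction_random_observation} with a second event \(\mathcal E_2\). The event \(\mathcal E_2\) removes the additive floor \(\xi_n\) from the contraction. The floor came only from \(\mathcal B_{\mathrm{deg}}\), from \(\mathcal B_{\mathrm{ind}}\), and from the part of \(\mathcal B_{\mathrm{sig}}\) inside \(\mathcal B_{\mathrm{deg}}\). Each of these is defined by a per-index event that does not depend on the estimate, so none of them needs a read-\(2\) bound. Take \(\mathcal E_2\) to be the event that \(\mathcal B_{\mathrm{deg}}=\emptyset\) and \(\mathcal B_{\mathrm{ind}}=\emptyset\). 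The proof of Theorem~\ref{th:iter_contraction_random_observation} gives \(\Prob(i\in\mathcal B_{\mathrm{deg}})\le\exp(-cnp\pi_0)\) and \(\Prob(i\in\mathcal B_{\mathrm{ind}})\le C\exp(-cnp\pi_0^2)\). A union bound over \(i\in[n]\) then gives \(\Prob(\mathcal E_2^c)\le Cn[\exp(-cnp\pi_0)+\exp(-cnp\pi_0^2)]\), and together with \(\Prob(\mathcal E_1^c)=O(n^{-\beta})\) this is the stated failure probability.

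On \(\mathcal E_1\cap\mathcal E_2\), rerun the deterministic part of the proof of Theorem~\ref{th:iter_contraction_random_observation} for an arbitrary aligned estimate in the basin. Every index outside \(\mathcal B_{\mathrm{sig}}\cup\mathcal B_{\mathrm{dep}}\cup\mathcal B_{\mathrm{ind}}\) is recovered. On this event \(\mathcal B_{\mathrm{ind}}=\emptyset\) and \(\mathcal B_{\mathrm{sig}}\subseteq\mathcal B_{\mathrm{sig}}\setminus\mathcal B_{\mathrm{deg}}\). The spectral-norm bounds on \(\mG-\EE[\mG]\) and \(\mW-\EE[\mW]\) that hold on \(\mathcal E_1\) already give \(|\mathcal B_{\mathrm{sig}}\setminus\mathcal B_{\mathrm{deg}}|\le C|T|/(np\pi_0^2)\) and \(|\mathcal B_{\mathrm{dep}}|\le Cm|T|/(np\pi_0^2)\). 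Hence \(\delta_{t+1}\le\gamma_n\delta_t\) holds simultaneously for every estimate with \(\delta_t\le0.5-\epsilon\). Because \(\gamma_n\to0\), the bound \(\gamma_n(0.5-\epsilon)\le0.5-\epsilon\) holds for large \(n\), so the basin is invariant. Induction as in \eqref{eq:delta_recursion}, now without \(\xi_n\), gives \(\delta_k\le\gamma_n^k\delta_0\) for all \(k\ge0\).

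Exact recovery follows because \(n\delta_k\) is an integer. Once \(\gamma_n^k\delta_0<1/n\), the count \(n\delta_k\) is below one and hence zero, so \(\delta_k=0\). Such a finite \(k\) exists because \(\gamma_n<1\); it is in fact of order \(\log n/\log(1/\gamma_n)\). For the last claim, use \(np\pi_0\ge np\pi_0^2\) to bound the failure probability by \(O(n^{-\beta})+2C\exp(\log n-cnp\pi_0^2)\), which tends to zero when \(cnp\pi_0^2-\log n\to\infty\).

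The main obstacle is the bookkeeping in the second step. One must check that the bound on \(\mathcal B_{\mathrm{sig}}\setminus\mathcal B_{\mathrm{deg}}\) in the earlier proof used only the spectral event and the degree lower bound, and not the read-\(2\) event. One must also check that the \(\log n/n\) term in \(\xi_n\) arose only from Lemma~\ref{lem:read_two_concentration}, so that it disappears once the two bad sets are empty.
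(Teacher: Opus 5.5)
Your proposal is correct and follows the paper's proof essentially step for step. You intersect the spectral-norm events with the union-bound event \(\mathcal B_{\mathrm{deg}}=\mathcal B_{\mathrm{ind}}=\varnothing\), rerun the deterministic estimates to get \(\delta_{t+1}\le\gamma_n\delta_t\) uniformly on the basin, and conclude by basin invariance, induction, and integrality of \(n\delta_k\); the only step left implicit is absorbing the \(C|T|/(np\pi_0^2)\) bound for \(\mathcal B_{\mathrm{sig}}\) into \(Cm|T|/(np\pi_0^2)\), which the paper does explicitly using \(m\ge2\).
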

\begin{proof}
Using the notation from the proof of
Theorem~\ref{th:iter_contraction_random_observation}, the estimates there
and a union bound give
\(
\Prob(\mathcal B_{\mathrm{deg}}\neq\varnothing)
\le
n\exp(-cnp\pi_0)
\)
and
\(
\Prob(\mathcal B_{\mathrm{ind}}\neq\varnothing)
\le
Cn\exp(-cnp\pi_0^2).
\)
On the complements of these events and on the two spectral-norm events used
in that proof, we have
\(
\mathcal B_{\mathrm{deg}}
=
\mathcal B_{\mathrm{ind}}
=
\varnothing.
\)
The remaining deterministic estimates give
\(
|\mathcal B_{\mathrm{sig}}|
\le
C{|T|}/{(np\pi_0^2)}
\)
and
\(
|\mathcal B_{\mathrm{dep}}|
\le
C{m}|T|/({np\pi_0^2}).
\)
Since \(m\ge2\), it follows that
\(
\delta_{k+1}
\le
C{m}\delta_k/({np\pi_0^2})
=
\gamma_n\delta_k
\)
uniformly throughout the positive-overlap basin. Since
\(\gamma_n\to0\), this basin is invariant for all sufficiently large \(n\).
Iteration proves the claimed bound. Finally, \(n\delta_k\) is an integer, so
\(\delta_k<1/n\) implies \(\delta_k=0\).
\end{proof}

\paragraph{From almost-exact to exact recovery}
The trajectory condition \(m=o(np\pi_0^2)\) already yields
uniform-in-iteration almost-exact recovery. Finite-step exact recovery
requires the stronger event that no block belongs to the exceptional set.
This leads to the sufficient probability condition
\(cnp\pi_0^2-\log n\to\infty\), where the additional \(\log n\) scale is
the cost of ruling out an error at every one of the \(n\) blocks.

This exact-recovery condition is sufficient rather than an
information-theoretically sharp threshold. In the complete-observation,
fixed-\(m\) specialization \(p=1\), it requires \(\pi_0\) to be a
sufficiently large constant multiple of \(\sqrt{\log n/n}\), which has the
same order as the near-optimal spectral exact-recovery guarantee of
Ling~\cite{ling2022near} under uniform corruption. Thus, the novelty is not
an improved order for the exact-recovery threshold. Rather, the stronger
all-block condition upgrades the same basin-uniform PPM trajectory---on
possibly sparse observations and for data-dependent iterates---from
almost-exact recovery to finite-step exact recovery.

\subsection{Spectral Initialization}

\label{sec:spectral_initialization}

We next analyze the reference-block spectral initializer used by PPM. Under
\(np\ge C_0\log n\) and \(m=o(np\pi_0^2)\), its aligned block error is
\(O_{\mathbb P}(m/(np\pi_0^2))\), and hence converges to zero in probability.
Thus, the result is stronger than basin entry: it is both a standalone
spectral-consistency guarantee and the mechanism that supplies initialization
for the PPM trajectory theory. Notably, the spectral-initialization and uniform-contraction theorems have the same dimension-to-signal condition, so controlling the
complete data-dependent trajectory incurs no additional factor in \(m\). As recorded in Remark~\ref{rem:beyond_uniform_law}, the spectral and
end-to-end conclusions also extend beyond the uniform law under the same
independence and mean conditions when, in addition, the gauged corruption
array is block-exchangeable; this additional symmetry is used only to
control the preselected reference block.
The expected-degree condition \(np\ge C_0\log n\) enters through
Lemma~\ref{lem:sparse_noise_spectral_norm}, which provides
\(O(\sqrt{np})\) operator-norm control for the unregularized sparse block
matrix. Its order also coincides with the Erd\H{o}s--R\'enyi connectivity
scale and is therefore necessary for exact recovery of every block. We do
not claim that it is necessary for vanishing block error or entry into the
positive-overlap basin; extending the analysis below this degree scale would
likely require a regularized or trimmed spectral initializer.

The argument is motivated by the reference-block spectral initialization
analysis in~\cite{chen2016projected}, but is formulated directly for the
uniform corruption model and the globally aligned block error used here.
Nguyen and Zhang~\cite{nguyen2025novel} obtain a sharp minimax-optimal result
using an anchor that aggregates all eigenspace blocks. Our objective here is
different: we provide a direct guarantee for the simpler reference-block
initializer used by PPM and connect it to the subsequent basin-uniform
trajectory.

Let
\(
\mlamb\in\mathbb R^{nm\times m}
\)
have as its columns an orthonormal basis for the leading \(m\)-dimensional
eigenspace of \(\mL\), and write
\(
\mlamb
=
[\mlamb_1^\top, 
\ldots, 
\mlamb_n^\top
]^{\top},
\)
\(
\mlamb_i\in\mathbb R^{m\times m}.
\)
We define the spectral initializer by
\begin{equation}
\label{eq:spectral_initializer}
\mQ_i^{(0)}
:=
\Proj\bigl(n\mlamb_i\mlamb_1^\top\bigr),
\qquad
i\in[n].
\end{equation}
This definition is independent of the choice of orthonormal basis for the
leading eigenspace.

\begin{theorem}[Vanishing block error for the spectral initializer]
\label{th:spectral_initialization}
Suppose that \(\mL\) follows
Definition~\ref{def:uniform_corruption_model}, with
\(m=m_n\), \(p=p_n\), and \(\pi_0=\pi_{0,n}\). Assume that
\(np\ge C_0\log n\) for a sufficiently large absolute constant \(C_0>0\),
and define
\(
r_n:={m}/{(np\pi_0^2)}.
\)
Suppose that \(r_n\to0\). Let \(\bvQ^{(0)}\) be the spectral initializer
defined in~\eqref{eq:spectral_initializer}. Then, for every fixed
\(\beta>0\), there exists a constant \(C_\beta>0\), independent of \(n\),
such that, for all sufficiently large \(n\) and every \(\tau\in(0,1)\),
\[
\Prob\left(
\delta_{\mathrm{glob}}\bigl(\bvP,\bvQ^{(0)}\bigr)>\tau
\right)
\le
O(n^{-\beta})+C_\beta\frac{r_n}{\tau}.
\]
Consequently,
\(
\delta_{\mathrm{glob}}\bigl(\bvP,\bvQ^{(0)}\bigr)
=
O_{\mathbb P}\bigl(m/(np\pi_0^2)\bigr).
\)
In particular, the aligned block error converges to zero in probability.
Consequently, for every fixed \(\epsilon\in(0,0.5)\),
\[
\Prob\left(
\delta_{\mathrm{glob}}\bigl(\bvP,\bvQ^{(0)}\bigr)
\le0.5-\epsilon
\right)
\ge
1-O(n^{-\beta})-C_{\beta,\epsilon}r_n,
\]
and this probability tends to one.
\end{theorem}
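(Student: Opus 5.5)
Write \(\overline{\mL}:=\EE[\mL]\). The plan is a Davis--Kahan argument followed by a blockwise rounding count and an exchangeability step for the reference block.

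\textbf{Population matrix.} Under Definition~\ref{def:uniform_corruption_model}, for \(i\ne j\),
\[
\EE[\mL_{ij}]=p\pi_0\mP_i\mP_j^\top+p(1-\pi_0)\mJ,
\qquad
\mJ:=m^{-1}\vone\vone^\top .
\]
Hence
\(\overline{\mL}=p\pi_0(\bvP\bvP^\top-\mI_{nm})+p(1-\pi_0)(\vone_n\vone_n^\top-\mI_n)\otimes\mJ\).

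Since \(\mJ\mP_j=\mJ\), the vector \(\vone_{nm}\) lies in the range of \(\bvP\). On the range of \(\bvP\), which is \(m\)-dimensional with \(\bvP^\top\bvP=n\mI_m\), the matrix \(\overline{\mL}\) has eigenvalues of order \(np\pi_0\) (together with one eigenvalue \(\approx np\) along \(\vone\)). On the orthogonal complement its eigenvalues are \(O(p)\). The leading \(m\)-dimensional eigenspace of \(\overline{\mL}\) is therefore exactly the range of \(\bvP\), with eigengap \(g\gtrsim np\pi_0\).

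\textbf{Perturbation.} Lemma~\ref{lem:sparse_noise_spectral_norm} with \(\mY_{ij}=C_{ij}\mP_i\mP_j^\top+(1-C_{ij})\mU_{ij}\) is not directly applicable, because the means \(\pi_0\mP_i\mP_j^\top+(1-\pi_0)\mJ\) are not common. I will gauge instead. Conjugating by \(\mathrm{diag}(\mP_i)\) replaces \(\mY_{ij}\) with \(\mP_i^\top\mY_{ij}\mP_j\), whose common mean \(\pi_0\mI+(1-\pi_0)\mJ\) is symmetric. The gauged observation has the same spectral norm deviation, so the lemma gives \(\|\mL-\overline{\mL}\|_2\le C_\beta\sqrt{np}\) with probability \(1-O(n^{-\beta})\); the hypothesis \(m=n^{O(1)}\) holds since \(m=o(n)\).

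Davis--Kahan then yields \(\|\mlamb\mlamb^\top-\bvP\bvP^\top/n\|_F\le \sqrt{2m}\,\|\mL-\overline{\mL}\|_2/g\). Consequently there is an orthogonal \(\mO\) with
\[
\|\mlamb-n^{-1/2}\bvP\mO\|_F^2\lesssim m\,np/(np\pi_0)^2=r_n .
\]

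\textbf{Rounding.} Write \(\mlamb_i=n^{-1/2}(\mP_i\mO+\mE_i)\), so that \(\sum_i\|\mE_i\|_F^2\lesssim nr_n\). Then
\[
n\mlamb_i\mlamb_1^\top=(\mP_i\mO+\mE_i)(\mP_1\mO+\mE_1)^\top=\mP_i\mP_1^\top+\mE_i\mO^\top\mP_1^\top+\mP_i\mO\mE_1^\top+\mE_i\mE_1^\top .
\]
A deterministic gap fact drives the count: if \(\|\mM-\mP_i\mP_1^\top\|_F<c\), then \(\Proj(\mM)=\mP_i\mP_1^\top\). This holds because \(\langle \mP_i\mP_1^\top,\mP_i\mP_1^\top-\mR\rangle=d\ge2\) while the perturbation contributes at most \(\|\cdot\|_F\sqrt{2d}\le d\) when the norm is below \(1\).

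On the event \(\|\mE_1\|_F\le 1/4\), every \(i\) with \(\|\mE_i\|_F\le 1/4\) (with some slack for the cross term) is recovered as \(\mP_i\mP_1^\top\), which is \(\mP_i\) after the global alignment \(\mR=\mP_1\). By Markov's inequality, the number of \(i\) with \(\|\mE_i\|_F>1/4\) is at most \(16\sum_i\|\mE_i\|_F^2\lesssim nr_n\). Hence \(\delta_{\mathrm{glob}}\lesssim r_n\) on that event.

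\textbf{Main obstacle: the reference block.} It remains to control \(\Prob(\|\mE_1\|_F>1/4)\). This is where exchangeability enters. The gauged array is exchangeable under relabeling of \([n]\), and the spectral construction is equivariant under that relabeling. Hence the law of \(\|\mE_i\|_F^2\) does not depend on \(i\) once \(\mO\) is chosen canonically (a Procrustes alignment to \(n^{-1/2}\bvP\), which is permutation-equivariant). Therefore
\[
\EE\bigl[\|\mE_1\|_F^2\wedge K\bigr]=n^{-1}\,\EE\Bigl[\textstyle\sum_i\|\mE_i\|_F^2\wedge K\Bigr]\lesssim r_n+O(n^{-\beta}),
\]
where truncation on the bad spectral event uses the deterministic bound \(\|\mlamb\|_F^2=m\) together with \(\bvP^\top\bvP=n\mI_m\). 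Markov's inequality then gives \(\Prob(\|\mE_1\|_F>1/4)\lesssim r_n\).

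Combining, for \(\tau\in(0,1)\): Markov's inequality applied to the count of bad blocks (\(\EE\) of count\(/n\lesssim r_n\)) gives
\[
\Prob(\delta_{\mathrm{glob}}>\tau)\le O(n^{-\beta})+C_\beta r_n+C_\beta r_n/\tau ,
\]
which is at most \(O(n^{-\beta})+C_\beta r_n/\tau\) after enlarging the constant, since \(\tau<1\). Taking \(\tau=0.5-\epsilon\) gives the final claim.

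Care is needed at three points: the equivariance of the Procrustes alignment, absorbing the cross term \(\mE_i\mE_1^\top\), and checking that the extra eigenvalue along \(\vone\) does not spoil the gap. The last is fine because that eigenvalue lies inside the signal subspace.
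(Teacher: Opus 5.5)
Your proof is correct, and its skeleton is the paper's: gauging to obtain a common symmetric mean, Lemma~\ref{lem:sparse_noise_spectral_norm}, Davis--Kahan with gap \(np\pi_0\), the rounding separation at Frobenius radius \(1\), and block exchangeability for the reference index. The difference lies in how the reference block is handled.

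The paper never introduces a rotation. With \(\widehat{\boldsymbol{\Pi}}=\mlamb\mlamb^\top\), \(\boldsymbol{\Delta}=\widehat{\boldsymbol{\Pi}}-\bvP\bvP^\top/n\), and \(\mathcal E=\{\|\mL-\EE[\mL]\|_2\le C_\beta\sqrt{np}\}\), it observes that \(n\mlamb_i\mlamb_1^\top=n\widehat{\boldsymbol{\Pi}}_{i1}\) is compared directly with \(\mP_i\mP_1^\top\). It bounds each failure indicator by \(\|n\boldsymbol{\Delta}_{i1}\|_F^2\), so \(\delta_{\mathrm{glob}}\le s_1=n\sum_i\|\boldsymbol{\Delta}_{i1}\|_F^2\). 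It then uses that the \(s_r\mathbf{1}_{\mathcal E}\) are identically distributed with average \(\|\boldsymbol{\Delta}\|_F^2\mathbf{1}_{\mathcal E}\le C_\beta r_n\), followed by one Markov step. Everything there is a function of the projector, which is unique on \(\mathcal E\), so equivariance is automatic.

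Your Procrustes factorization instead decouples the reference block. On \(\mathcal E\cap\{\|\mE_1\|_F\le 1/4\}\) you have the deterministic bound \(\delta_{\mathrm{glob}}\le 16\|\mlamb-n^{-1/2}\bvP\mO\|_F^2\le Cr_n\). Exchangeability is therefore needed only for the fixed-threshold event \(\{\|\mE_1\|_F>1/4\}\). This gives the sharper statement \(\Prob(\delta_{\mathrm{glob}}>Cr_n)\le O(n^{-\beta})+Cr_n\), which implies the theorem; your final Markov step on the count is unnecessary. In the paper's route, by contrast, the reference-block error enters every \(\boldsymbol{\Delta}_{i1}\), and only the \(r_n/\tau\) tail comes out.

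The price is the bookkeeping you flagged. First, you need \(\bvP^\top\mlamb\) to be nonsingular on \(\mathcal E\) (true once \(\|\boldsymbol{\Delta}\|_2<1\)), so that the polar factor is unique and \(\|\mE_i\|_F\) depends only on \(\widehat{\boldsymbol{\Pi}}\). Second, you should assert identical distribution for \(\|\mE_i\|_F^2\mathbf{1}_{\mathcal E}\), not for \(\|\mE_i\|_F^2\wedge K\) on the whole space. Off \(\mathcal E\) the leading eigenspace can be non-unique, and the basis the algorithm picks need not be equivariant. The bound \(\EE[\|\mE_1\|_F^2\wedge K]\le \EE[\|\mE_1\|_F^2\mathbf{1}_{\mathcal E}]+K\,\Prob(\mathcal E^c)\) then gives exactly what you use.
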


\begin{proof}
Let
\(
\boldsymbol{D}
:=
\operatorname{diag}(\mP_1,\dots,\mP_n)
\)
and consider the gauged measurement matrix
\(
\mL^\sharp
:=
\boldsymbol{D}^\top\mL\boldsymbol{D}.
\)
For \(i<j\), define
\(
\boldsymbol{V}_{ij}
:=
\mP_i^\top\mU_{ij}\mP_j.
\)
Under the uniform corruption model, invariance under left and right
multiplication by fixed permutation matrices implies that the matrices
\(\boldsymbol{V}_{ij}\) remain independent and uniformly distributed on
\(\Perm(m)\). In particular,
\(
\EE[\boldsymbol{V}_{ij}]
=
m^{-1}\vone\vone^\top.
\)
Thus,
\(
\mL^\sharp_{ij}
=
A_{ij}
\left(
C_{ij}\mI_m
+
(1-C_{ij})\boldsymbol{V}_{ij}
\right).
\)

Let
\(
\boldsymbol{J}
:=
\frac{1}{m}\vone\vone^\top
\)
and
\(
\boldsymbol{K}
:=
\pi_0\mI_m+(1-\pi_0)\boldsymbol{J}.
\)
Since
\(
\EE[\boldsymbol{V}_{ij}]
=
\boldsymbol{J},
\)
we have
\begin{equation}
\label{eq:gauged_population_matrix}
\EE[\mL^\sharp]
=
p\bigl(\vone_n\vone_n^\top-\mI_n\bigr)
\otimes\boldsymbol{K}.
\end{equation}

The matrix \(\boldsymbol{K}\) has eigenvalue \(1\) in the direction
\(\vone\) and eigenvalue \(\pi_0\) on its orthogonal complement. The matrix
\(\vone_n\vone_n^\top-\mI_n\) has eigenvalue \(n-1\) in the direction
\(\vone_n\) and eigenvalue \(-1\) on its orthogonal complement. It follows
that the leading \(m\)-dimensional eigenspace of
\(\EE[\mL^\sharp]\) is
\(
\operatorname{span}(\vone_n)\otimes\mathbb R^m.
\)
The \(m\)-th largest eigenvalue, $\lambda_m$, of
\(\EE[\mL^\sharp]\) is \(p(n-1)\pi_0\), whereas its
\((m+1)\)-st largest eigenvalue, $\lambda_{m+1}$, is \(-p\pi_0\). Hence, the spectral gap is
\begin{equation}
\label{eq:population_spectral_gap}
\lambda_m - \lambda_{m+1} = np\pi_0.
\end{equation}
Consequently, the leading rank-\(m\) spectral projector of
\(\EE[\mL]\) is
\begin{equation}
\label{eq:population_projector}
\boldsymbol{\Pi}_\star
=
\boldsymbol{D}
\left(
\frac{1}{n}\vone_n\vone_n^\top\otimes\mI_m
\right)
\boldsymbol{D}^\top
=
\frac{1}{n}\bvP\bvP^\top.
\end{equation}

We next control the empirical leading spectral projector. In the gauged
coordinates, the off-diagonal blocks have the form
\(
A_{ij}\boldsymbol{Y}_{ij},
\)
\(
\boldsymbol{Y}_{ij}
:=
C_{ij}\mI_m+(1-C_{ij})\boldsymbol{V}_{ij},
\)
where
\(
\|\boldsymbol{Y}_{ij}\|_2=1,
\)
\(
\EE[\boldsymbol{Y}_{ij}]
=
\boldsymbol{K}.
\)
Since
\(
m/n=p\pi_0^2r_n\le r_n\to0,
\)
we have \(m\le n\) for all sufficiently large \(n\), and hence
\(m=n^{O(1)}\). Therefore,
Lemma~\ref{lem:sparse_noise_spectral_norm} applies to \(\mL^\sharp\). 
Since conjugation by \(\boldsymbol{D}\) preserves the
spectral norm, let \(\mathcal E\) denote the event on which
\begin{equation}
\label{eq:spectral_initialization_noise}
\|\mL-\EE[\mL]\|_2
\le
C_\beta\sqrt{np}.
\end{equation}
Then
\(
\Prob(\mathcal E^c)=O(n^{-\beta}).
\)
Furthermore,
\(
\pi_0\sqrt{np}
=
\sqrt{{m}/{r_n}}
\to\infty,
\)
because \(m\ge2\) and \(r_n\to0\).
Therefore,
\(
{\|\mL-\EE[\mL]\|_2}/{(np\pi_0)}
\to0
\)
on the preceding event. By Weyl's inequality and
\eqref{eq:population_spectral_gap}, the empirical \(m\)-th and
\((m+1)\)-st eigenvalues are consequently separated for all sufficiently
large \(n\). Thus the leading rank-\(m\) empirical spectral projector is
uniquely defined on this event.

Let
\(
\widehat{\boldsymbol{\Pi}}
:=
\mlamb\mlamb^\top
\)
be the empirical leading rank-\(m\) spectral projector and set
\(
\boldsymbol{\Delta}
:=
\widehat{\boldsymbol{\Pi}}-\boldsymbol{\Pi}_\star.
\)
By the Davis--Kahan theorem~\cite[Theorem~2]{yu2015useful} and
\eqref{eq:population_spectral_gap}--\eqref{eq:spectral_initialization_noise},
\[
\|\boldsymbol{\Delta}\|_F^2
\le
C_\beta m
\frac{\|\mL-\EE[\mL]\|_2^2}
{n^2p^2\pi_0^2}
\le
C_\beta\frac{m}{np\pi_0^2}
=
C_\beta r_n
\ \text{ on the event } \mathcal E.
\]
We now translate the projector bound into a block error bound. For a
permutation matrix \(\boldsymbol{T}\), the following elementary implication
will be useful:
\begin{equation}
\label{eq:spectral_rounding_separation}
\Proj(\boldsymbol{M})\neq\boldsymbol{T}
\quad\Longrightarrow\quad
\|\boldsymbol{M}-\boldsymbol{T}\|_F\ge1.
\end{equation}
Indeed, let
\(\boldsymbol{Q}:=\Proj(\boldsymbol{M})\neq\boldsymbol{T}\) and set
\(
d
:=
m-\langle\boldsymbol{T},\boldsymbol{Q}\rangle.
\)
Two distinct permutation matrices differ in at least two positions, so
\(d\ge2\), and
\(
\|\boldsymbol{Q}-\boldsymbol{T}\|_F^2=2d.
\)
Writing
\(\boldsymbol{M}=\boldsymbol{T}+\boldsymbol{E}\), optimality of
\(\boldsymbol{Q}\) gives
\(
\langle\boldsymbol{E},
\boldsymbol{Q}-\boldsymbol{T}\rangle
\ge d.
\)
The Cauchy--Schwarz inequality therefore yields
\(
\|\boldsymbol{E}\|_F
\ge
\sqrt{{d}/{2}}
\ge1,
\)
proving~\eqref{eq:spectral_rounding_separation}.

By~\eqref{eq:population_projector},
\(
n(\boldsymbol{\Pi}_\star)_{i1}
=
\mP_i\mP_1^\top.
\)
Moreover,
\(
n\widehat{\boldsymbol{\Pi}}_{i1}
=
n\mlamb_i\mlamb_1^\top.
\)
It follows from~\eqref{eq:spectral_rounding_separation} that \newline
\(
\boldsymbol{1}
\left\{
\mQ_i^{(0)}\neq\mP_i\mP_1^\top
\right\}
\le
\left\|
n\boldsymbol{\Delta}_{i1}
\right\|_F^2.
\)
Since right alignment by \(\mP_1\) is admissible in the definition of
\(\delta_{\mathrm{glob}}\),
\begin{equation}
\delta_{\mathrm{glob}}\bigl(\bvP,\bvQ^{(0)}\bigr)
\le
\frac{1}{n}
\sum_{i=1}^n
\boldsymbol{1}
\left\{
\mQ_i^{(0)}\neq\mP_i\mP_1^\top
\right\}
\le
n\sum_{i=1}^n
\|\boldsymbol{\Delta}_{i1}\|_F^2.
\label{eq:spectral_initialization_block_error}
\end{equation}

For \(r\in[n]\), define
\(
s_r
:=
n\sum_{i=1}^n
\|\boldsymbol{\Delta}_{ir}\|_F^2.
\)
Let
\(
\boldsymbol{\Delta}^{\sharp}
:=
\boldsymbol{D}^{\top}\boldsymbol{\Delta}\boldsymbol{D}.
\)
Since
\(
\boldsymbol{\Delta}^{\sharp}_{ir}
=
\mP_i^\top\boldsymbol{\Delta}_{ir}\mP_r,
\)
we have
\(
\|\boldsymbol{\Delta}^{\sharp}_{ir}\|_F
=
\|\boldsymbol{\Delta}_{ir}\|_F.
\)
The distribution of the gauged matrix \(\mL^\sharp\), as well as the
event \(\mathcal E\), is invariant under simultaneous permutations of the
\(n\) block indices. Since the leading spectral projector is uniquely
defined on \(\mathcal E\), it follows that
\(
s_1\boldsymbol{1}_{\mathcal E},
\dots,
s_n\boldsymbol{1}_{\mathcal E}
\)
are identically distributed. 
Furthermore,
\(
\frac{1}{n}\sum_{r=1}^n s_r
=
\|\boldsymbol{\Delta}\|_F^2.
\)
Therefore,
\(
\EE\left[
s_1\boldsymbol{1}_{\mathcal E}
\right]
=
\EE\left[
\frac{1}{n}
\sum_{r=1}^n
s_r\boldsymbol{1}_{\mathcal E}
\right]
\le
C_\beta r_n.
\)
For every \(\tau>0\), Markov's inequality gives
\(
\Prob\left(
\mathcal E
\cap
\{s_1>\tau\}
\right)
\le
C_\beta\frac{r_n}{\tau}.
\)
Combining this estimate with
\eqref{eq:spectral_initialization_block_error} and
\(\Prob(\mathcal E^c)=O(n^{-\beta})\) gives
\(
\Prob\left(
\delta_{\mathrm{glob}}\bigl(\bvP,\bvQ^{(0)}\bigr)>\tau
\right)
\le
O(n^{-\beta})
+
C_\beta {r_n}/{\tau}.
\)
Since \(r_n\to0\) by assumption, the preceding probability bound implies
that the aligned block error converges to zero in probability. Taking
\(\tau=0.5-\epsilon\) gives the positive-overlap conclusion.
\end{proof}

\begin{corollary}[End-to-end almost-exact and finite-step exact recovery]
\label{cor:end_to_end_spectral_ppm}
Suppose that Assumption~\ref{ass:uniform_corruption_scaling} holds and that
\(np\ge C_0\log n\). Let \(\bvQ^{(0)}\) be the spectral initializer, define
\(\bvQ^{(k+1)}=\Proj(\mL\bvQ^{(k)})\), and set
\(\delta_k:=\delta_{\mathrm{glob}}(\bvP,\bvQ^{(k)})\). Fix
\(\epsilon\in(0,0.5)\) and \(\beta>0\).

Then there exist constants \(c,C,C_{\beta,\epsilon}>0\), independent of
\(n\), such that, upon setting
\[
r_n:=\frac{m}{np\pi_0^2},
\qquad
\gamma_n:=Cr_n,
\qquad
\xi_n:=C\left[
e^{-cnp\pi_0}
+
e^{-cnp\pi_0^2}
+
\frac{\log n}{n}
\right],
\]
the following conclusions hold for all sufficiently large \(n\).
We have \(r_n,\gamma_n,\xi_n\to0\), and, with probability at least
\(1-O(n^{-\beta})-C_{\beta,\epsilon}r_n\), the initializer satisfies
\(\delta_0\le0.5-\epsilon\) and, simultaneously for every \(k\ge1\),
\(
\delta_k
\le
\gamma_n^k\delta_0
+
{(1-\gamma_n^k)}\xi_n/{(1-\gamma_n)}.
\)
In particular, with the same probability,
\(
\sup_{k\ge1}\delta_k
\le
\gamma_n\delta_0+\xi_n/(1-\gamma_n)
=o(1),
\)
so spectrally initialized PPM achieves uniform-in-iteration almost-exact
recovery. 
In addition, since \(\delta_0=O_{\mathbb P}(r_n)\), for every fixed
\(k\ge0\), \(\delta_k=O_{\mathbb P}(r_n^{k+1}+\xi_n)\). Thus, every fixed
PPM refinement contributes an additional factor \(r_n\) until the additive
error floor is reached. 
Moreover, with probability at least
\(
1-O(n^{-\beta})-C_{\beta,\epsilon}r_n
-Cn[e^{-cnp\pi_0}+e^{-cnp\pi_0^2}],
\)
we have
\(
\delta_k\le\gamma_n^k\delta_0
\)
simultaneously for every \(k\ge0\), and exact recovery occurs once
\(
\gamma_n^k\delta_0<1/n.
\)
Consequently, since \(np\pi_0\ge np\pi_0^2\), this probability tends to one
whenever
\(
cnp\pi_0^2-\log n\to\infty.
\)
\end{corollary}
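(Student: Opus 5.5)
The plan is to intersect the high-probability events from Theorem~\ref{th:spectral_initialization} and Theorem~\ref{th:iter_contraction_random_observation}, and then run the deterministic recursion argument of Section~\ref{sec:contraction_across_iterations} on that intersection. Neither theorem needs independence between the initializer and \(\mL\): the contraction event is basin-uniform, so it applies to the data-dependent initializer \(\bvQ^{(0)}\).

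\textbf{Step 1: the constants vanish.} Assumption~\ref{ass:uniform_corruption_scaling} gives \(r_n\to0\) directly, so \(\gamma_n\to0\). By Remark~\ref{rem:scaling_assumption_implies_limit}, \(np\pi_0^2\to\infty\), and since \(np\pi_0\ge np\pi_0^2\), also \(\xi_n\to0\).

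\textbf{Step 2: basin entry and contraction on one event.} Theorem~\ref{th:spectral_initialization} with \(\tau=0.5-\epsilon\) gives \(\delta_0\le 0.5-\epsilon\) off an event of probability \(O(n^{-\beta})+C_{\beta,\epsilon}r_n\). Let \(\mathcal F\) be the event of Theorem~\ref{th:iter_contraction_random_observation}, which has probability \(1-O(n^{-\beta})\). On \(\mathcal F\intersect\{\delta_0\le0.5-\epsilon\}\), the invariance inequality \eqref{eq:basin_invariance_condition} holds for large \(n\), because \(\gamma_n,\xi_n\to0\). Induction then gives \eqref{eq:delta_recursion} for all \(k\ge1\). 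Unrolling the recursion gives \(\delta_k\le\gamma_n^k\delta_0+\xi_n(1-\gamma_n^k)/(1-\gamma_n)\). Since \(\gamma_n<1\), the supremum over \(k\ge1\) is at most \(\gamma_n\delta_0+\xi_n/(1-\gamma_n)=o(1)\). A union bound yields the stated probability.

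\textbf{Step 3: the \(O_{\mathbb P}\) rate.} For fixed \(k\), the same event gives \(\delta_k\le\gamma_n^k\delta_0+2\xi_n\). Theorem~\ref{th:spectral_initialization} gives \(\delta_0=O_{\mathbb P}(r_n)\), and \(\gamma_n^k=C^kr_n^k\) with \(k\) fixed. Hence \(\delta_k=O_{\mathbb P}(r_n^{k+1}+\xi_n)\), since the excluded event has vanishing probability. The case \(k=0\) is immediate.

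\textbf{Step 4: exact recovery.} Intersect instead with the event of Corollary~\ref{cor:iterative_exact_recovery}. That corollary also requires only \(\delta_0\le0.5-\epsilon\) on a basin-uniform event, so it applies to the spectral \(\bvQ^{(0)}\). On this event \(\delta_k\le\gamma_n^k\delta_0\) for all \(k\ge0\). Integrality of \(n\delta_k\) gives \(\delta_k=0\) once \(\gamma_n^k\delta_0<1/n\). A union bound gives the stated probability, and \(n e^{-cnp\pi_0}\le ne^{-cnp\pi_0^2}\to0\) when \(cnp\pi_0^2-\log n\to\infty\).

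\textbf{Main obstacle.} The subtle point is that \(\bvQ^{(0)}\) depends on \(\mL\). One must check that the contraction theorem and Corollary~\ref{cor:iterative_exact_recovery} are stated on events not depending on the estimate. They are, as their proofs note, so no independence is needed, and the argument reduces to bookkeeping of probabilities.
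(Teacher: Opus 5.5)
Your proposal is correct and takes the same route as the paper. The paper proves the corollary in one line by combining Theorem~\ref{th:spectral_initialization} (with \(\tau=0.5-\epsilon\)), the basin-uniform event of Theorem~\ref{th:iter_contraction_random_observation} with the induction of Section~\ref{sec:contraction_across_iterations}, and Corollary~\ref{cor:iterative_exact_recovery}, followed by union bounds. Your explicit check that these events do not depend on the estimate is exactly what allows them to be applied to the data-dependent spectral initializer, a point the paper leaves implicit.
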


This follows by combining
Theorem~\ref{th:spectral_initialization},
Theorem~\ref{th:iter_contraction_random_observation}, and
Corollary~\ref{cor:iterative_exact_recovery}.

\section{Partial Permutations: Formulation and Scope of the Theory}
\label{sec:partial_permutations}

For integers \(1\le r\le m\), define the set of row-injection matrices
\[
\operatorname{Inj}(r,m)
:=
\left\{
\mQ\in\{0,1\}^{r\times m}:
\mQ\boldsymbol{1}_m=\boldsymbol{1}_r,\quad
\mQ^\top\boldsymbol{1}_r\le\boldsymbol{1}_m
\right\},
\]
where the final inequality is interpreted componentwise. Thus, every row of
\(\mQ\) contains exactly one nonzero entry and every column contains at most
one nonzero entry.

Suppose that observation \(i\) contains \(m_i\) elements from a common latent
universe of size \(m\). Its correspondence with the latent universe is
represented by
\(
\mP_i\in\operatorname{Inj}(m_i,m).
\)
Define its latent visibility set and visibility mask by
\(
S_i
:=
\left\{
k\in[m]:(\mP_i^\top\mP_i)_{kk}=1
\right\},
\)
\(
\mD_i
:=
\mP_i^\top\mP_i.
\)
The correct relative correspondence between observations \(i\) and \(j\) is
\(
\mP_i\mP_j^\top\in\{0,1\}^{m_i\times m_j}.
\)
This matrix has at most one nonzero entry in each row and column, but it need
not have one nonzero entry in every row: a row is zero when the corresponding
element of \(S_i\) is not visible in observation \(j\).

Let
\(
M:=\sum_{i=1}^n m_i,
\)
\(
\bvP
:=
[\mP_1^\top, \ldots,  \mP_n^\top]^{\top}
\in\mathbb R^{M\times m}.
\)
With identity diagonal blocks, the noiseless block matrix is
\(\mL_{\mathrm{clean}}^{\mathrm{full}}=\bvP\bvP^\top\). Under the
zero-diagonal convention used by PPM, we instead write
\(
\mL_{\mathrm{clean}}
:=
\bvP\bvP^\top
-
\operatorname{blkdiag}
\bigl(\mI_{m_1},\ldots,\mI_{m_n}\bigr).
\)

\paragraph{Optimization and blockwise projection}
A natural least-squares formulation is
\[
\min_{\substack{\bvQ=(\mQ_1^\top,\ldots,\mQ_n^\top)^\top\\
                 \mQ_i\in\operatorname{Inj}(m_i,m)}}
\|\mL-\bvQ\bvQ^\top\|_F^2.
\]
Unlike the full-permutation case, this problem is not generally equivalent to
maximizing \(\tr(\bvQ^\top\mL\bvQ)\).
Indeed,
\(
\|\mL-\bvQ\bvQ^\top\|_F^2
=
\|\mL\|_F^2
-
2\tr(\bvQ^\top\mL\bvQ)
+
\|\bvQ^\top\bvQ\|_F^2.
\)
Define
\(
d_k(\bvQ)
:=
\sum_{i=1}^n(\mQ_i^\top\mQ_i)_{kk}.
\)
Then
\(
\bvQ^\top\bvQ
=
\operatorname{diag}\bigl(d_1(\bvQ),\ldots,d_m(\bvQ)\bigr)
\)
and hence
\(
\|\bvQ^\top\bvQ\|_F^2
=
\sum_{k=1}^m d_k(\bvQ)^2,
\)
which can vary with \(\bvQ\).

One may nevertheless define a blockwise projected iteration for the trace
objective by setting
\(\mQ_i^{(t+1)}\in\operatorname*{argmax}_{\mQ\in\operatorname{Inj}(m_i,m)}
\tr\bigl(\mQ^\top(\mL\bvQ^{(t)})_i\bigr)\).
This is an ordinary assignment problem. Indeed, for
\(\mA\in\mathbb R^{r\times m}\), set
\(\widehat{\mA}:=\bigl[\mA^\top,\boldsymbol{0}_{m\times(m-r)}\bigr]^\top\)
and choose
\(\widehat{\mQ}\in\operatorname*{argmax}_{\mR\in\Perm(m)}
\tr(\mR^\top\widehat{\mA})\).
The matrix formed by the first \(r\) rows of \(\widehat{\mQ}\) belongs to
\(\operatorname*{argmax}_{\mQ\in\operatorname{Inj}(r,m)}
\tr(\mQ^\top\mA)\), because every row-injection matrix can be completed to
a full permutation matrix and the added zero rows do not affect the
objective value.

For two partial-permutation block vectors
\(\bvP=(\mP_1^\top,\ldots,\mP_n^\top)^\top\) and
\(\bvQ=(\mQ_1^\top,\ldots,\mQ_n^\top)^\top\), define 
\(
\delta_{\mathrm{part}}(\bvP,\bvQ)
:=
\min_{\mH\in\Perm(m)}
\frac{1}{n}
\left|
\left\{
i\in[n]:\mP_i\neq\mQ_i\mH
\right\}
\right|.
\)
This is the natural analogue of
\(\delta_{\mathrm{glob}}\) because the relative correspondences are
unchanged under a common right permutation.

\begin{proposition}[Reduction under a common visible support]
\label{prop:common_support_partial_permutations}
Suppose that there is a fixed set \(S\subseteq[m]\), with \(|S|=r\), such
that \(S_i=S\) for every \(i\in[n]\). Then the observed synchronization
problem reduces exactly to permutation synchronization of size \(r\).

More precisely, if the reduced relative observations follow the uniform
corruption model of Definition~\ref{def:uniform_corruption_model}, with
permutation size \(r\), then the full recovery theory applies to the reduced
PPM. Equivalently, the same conclusions hold for the rectangular iteration
when every iterate is constrained to have one fixed common support. They are
not asserted for an unrestricted projection over
\(\operatorname{Inj}(r,m)\), which may change the support.
\end{proposition}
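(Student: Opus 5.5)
The plan is to build an explicit isometric identification between the common-support problem and an $r\times r$ permutation synchronization problem. The PPM iteration, the error metric, and the model all need to pass through this identification unchanged. Then Theorems~\ref{th:main_th}, \ref{th:iter_contraction_random_observation}, \ref{th:spectral_initialization} and Corollaries~\ref{cor:simultaneous_one_step_recovery}, \ref{cor:iterative_exact_recovery}, \ref{cor:end_to_end_spectral_ppm} can be invoked verbatim with $m$ replaced by $r$.

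\textbf{Step 1: the coordinate identification.} Since $S_i=S$, each $m_i=r$. Fix the column-selection matrix $\mathbf{E}_S\in\{0,1\}^{m\times r}$ whose columns are $e_k$, $k\in S$. Because $\mP_i\in\operatorname{Inj}(r,m)$ has all nonzero columns in $S$, we have $\mP_i=\mP_i\mathbf{E}_S\mathbf{E}_S^\top$. Set $\widetilde{\mP}_i:=\mP_i\mathbf{E}_S$. This is an $r\times r$ $0/1$ matrix with exactly one $1$ per row and at most one per column, hence a permutation matrix. The same construction applies to any iterate $\mQ_i$ constrained to have support $S$.

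Next I check that $\mP_i\mP_j^\top=\widetilde{\mP}_i\widetilde{\mP}_j^\top$, so the observed blocks and the matrix $\mL$ are literally unchanged. For the iteration, $(\mL\bvQ)_i\mathbf{E}_S=(\mL\widetilde{\bvQ})_i$. Moreover, $\tr(\mQ^\top\mA)=\tr\bigl((\mQ\mathbf{E}_S)^\top \mA\mathbf{E}_S\bigr)$ for every $\mQ$ supported on $S$. Hence the support-constrained projection of $(\mL\bvQ)_i$ corresponds, under $\mQ\mapsto\mQ\mathbf{E}_S$, to $\Proj$ in $\Perm(r)$ applied to $(\mL\widetilde{\bvQ})_i$. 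Ties are handled by transporting the deterministic tie-breaking rule. The spectral initializer transfers the same way: the leading eigenspace of $\mL$ is unchanged, and rounding $n\mlamb_i\mlamb_1^\top$ is an $r\times r$ assignment. One should check that the initializer is the square one $\Proj(n\mlamb_i\mlamb_1^\top)$ on $\Perm(r)$ and then lifted by $\mathbf{E}_S^\top$.

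\textbf{Step 2: the error metrics match.} I need $\delta_{\mathrm{part}}(\bvP,\bvQ)=\delta_{\mathrm{glob}}(\widetilde{\bvP},\widetilde{\bvQ})$. In one direction, any $\mR\in\Perm(r)$ extends to some $\mH\in\Perm(m)$ acting as $\mathbf{E}_S\mR\mathbf{E}_S^\top$ on $S$ and as the identity off $S$. Then $\mQ_i\mH=\mP_i$ if and only if $\widetilde{\mQ}_i\mR=\widetilde{\mP}_i$, which gives $\delta_{\mathrm{part}}\le\delta_{\mathrm{glob}}$.

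The converse is the main subtlety. A general $\mH$ may move $S$ off itself. If $\mH$ maps some column of $S$ outside $S$, then $\mQ_i\mH$ has support different from $S$ and so differs from every $\mP_i$. In that case $\mH$ yields error $1$, which is no better than any alignment. Otherwise $\mH$ preserves $S$ and restricts to some $\mR\in\Perm(r)$, so the two minima agree. Note that $\delta_{\mathrm{part}}$ is defined via $\mP_i\neq\mQ_i\mH$, so supports must be tracked in this argument.

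\textbf{Step 3: transfer of the model.} The hypothesis is that the reduced observations $\widetilde{\mL}_{ij}$ follow Definition~\ref{def:uniform_corruption_model} with size $r$ and ground truth $\widetilde{\mP}_i$. All assumptions therefore hold with $m\to r$, and each cited result applies to the reduced PPM. By Steps 1–2 the conclusions lift back to the constrained rectangular iteration.

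Finally, for the last sentence of the proposition I would note that an unrestricted projection over $\operatorname{Inj}(r,m)$ can place mass on columns outside $S$. For example, when $(\mL\bvQ)_i$ has zero columns off $S$, ties allow such choices, so the identification in Step 1 breaks and nothing is claimed in that case.
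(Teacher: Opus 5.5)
Your proposal is correct and follows essentially the same route as the paper. You identify each $\mP_i$ and each support-$S$ iterate with an element of $\Perm(r)$ via the selection matrix (your $\mathbf{E}_S$ is the transpose of the paper's $\mE_S$), check that the block scores and the assignment objective carry over unchanged, and then invoke the full-permutation results with $m$ replaced by $r$. Your Step~2 identity $\delta_{\mathrm{part}}=\delta_{\mathrm{glob}}$ appears in the paper inside the proof of Corollary~\ref{cor:common_support_partial_recovery} rather than here; your observation that any non-$S$-preserving $\mH$ has error $1$ is a valid and slightly more direct substitute for the paper's argument that a minimizing $\mH$ must match at least one block.
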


\begin{proof}
Choose
\(\mE_S\in\operatorname{Inj}(r,m)\)
whose rows enumerate \(S\). For every \(i\), there is a unique
\(\mR_i\in\Perm(r)\) such that \(\mP_i=\mR_i\mE_S\), and define
\(
\bvR:=[\mR_1^\top,\ldots,\mR_n^\top]^\top.
\)
Since \(\mE_S\mE_S^\top=\mI_r\),
\(
\mP_i\mP_j^\top
=
\mR_i\mE_S\mE_S^\top\mR_j^\top
=
\mR_i\mR_j^\top.
\)
Thus, the relative observations are exactly those generated by the reduced
permutations \(\mR_1,\ldots,\mR_n\).

To verify the iteration, suppose that
\(\mQ_j^{(t)}=\widehat{\mR}_j^{(t)}\mE_S\), where
\(\widehat{\mR}_j^{(t)}\in\Perm(r)\), and define
\(
\widehat{\bvR}^{(t)}
:=
[(\widehat{\mR}_1^{(t)})^\top,\ldots,
  (\widehat{\mR}_n^{(t)})^\top]^\top.
\)
Then
\(
(\mL\bvQ^{(t)})_i
=
(\mL\widehat{\bvR}^{(t)})_i\mE_S.
\)
For every \(\mR\in\Perm(r)\) and
\(\mA\in\mathbb R^{r\times r}\),
\(
\langle\mR\mE_S,\mA\mE_S\rangle
=
\langle\mR,\mA\rangle.
\)
Consequently, projection onto
\(\{\mR\mE_S:\mR\in\Perm(r)\}\)
is exactly the size-\(r\) permutation projection. The one-step,
uniform-contraction, finite-step exact-recovery, and spectral-initialization
results therefore follow with \(m\) replaced by \(r\).
\end{proof}

\begin{corollary}[End-to-end almost-exact and finite-step exact recovery under common support]
\label{cor:common_support_partial_recovery}
Suppose that Proposition~\ref{prop:common_support_partial_permutations} holds
with support size \(r=r(n)\), and that the reduced \(r\times r\) observations
follow the uniform corruption model. Assume
Assumption~\ref{ass:uniform_corruption_scaling} with \(m\) replaced by \(r\),
and suppose that \(np\ge C_0\log n\).

Fix \(\mE_S\in\operatorname{Inj}(r,m)\) whose rows enumerate \(S\), and
define the unique matrices \(\mR_i\in\Perm(r)\) by
\(\mP_i=\mR_i\mE_S\). Set
\(
\bvR:=[\mR_1^\top,\ldots,\mR_n^\top]^\top.
\)
Let
\(
\widehat{\bvR}^{(0)}
=
[(\widehat{\mR}_1^{(0)})^\top,\ldots,
  (\widehat{\mR}_n^{(0)})^\top]^\top
\)
be the reduced spectral initializer, let
\(
\widehat{\bvR}^{(k+1)}
=
\Proj(\mL\widehat{\bvR}^{(k)}),
\)
and define the lifted iterates by
\(
\mQ_i^{(k)}=\widehat{\mR}_i^{(k)}\mE_S
\)
and
\(
\bvQ^{(k)}
=
[(\mQ_1^{(k)})^\top,\ldots,(\mQ_n^{(k)})^\top]^\top.
\)
Fix
\(\epsilon\in(0,0.5)\) and \(\beta>0\).

There exist positive constants \(c,C,C_{\beta,\epsilon}\), independent of
\(n\), such that, upon setting
\(
\eta_n:={r}/{(np\pi_0^2)},
\)
\(
\gamma_n:=C\eta_n,
\)
\(
\xi_n:=
C[
e^{-cnp\pi_0}
+
e^{-cnp\pi_0^2}
+
{\log n}/{n}
],
\)
the following holds for all sufficiently large \(n\). Assumption
\ref{ass:uniform_corruption_scaling}, with \(m\) replaced by \(r\), implies
\(\eta_n,\gamma_n,\xi_n\to0\).
Then, with probability at least
\(1-O(n^{-\beta})-C_{\beta,\epsilon}\eta_n\),
the initializer satisfies
\(
\delta_{\mathrm{part}}(\bvP,\bvQ^{(0)})
\le
0.5-\epsilon,
\)
and, simultaneously for every \(k\ge1\),
\(
\delta_{\mathrm{part}}(\bvP,\bvQ^{(k)})
\le
\gamma_n^k\delta_{\mathrm{part}}(\bvP,\bvQ^{(0)})
+
{(1-\gamma_n^k)}\xi_n/{(1-\gamma_n)}.
\)
With the same probability,
\(
\sup_{k\ge1}
\delta_{\mathrm{part}}(\bvP,\bvQ^{(k)})
\le
\gamma_n\delta_{\mathrm{part}}(\bvP,\bvQ^{(0)})
+
{\xi_n}/{(1-\gamma_n)}
=
o(1),
\)
so the lifted PPM trajectory achieves uniform-in-iteration almost-exact
recovery under common support.
Moreover, with probability at least
\(
1-O(n^{-\beta})
-C_{\beta,\epsilon}\eta_n
-Cn e^{-cnp\pi_0}
-Cn e^{-cnp\pi_0^2},
\)
simultaneously for every \(k\ge0\),
\(
\delta_{\mathrm{part}}(\bvP,\bvQ^{(k)})
\le
\gamma_n^k
\delta_{\mathrm{part}}(\bvP,\bvQ^{(0)}).
\)
On this event, exact recovery occurs once
\(
\gamma_n^k
\delta_{\mathrm{part}}(\bvP,\bvQ^{(0)})<1/n.
\)
Consequently, since \(np\pi_0\ge np\pi_0^2\), the preceding probability
tends to one whenever
\(
cnp\pi_0^2-\log n\to\infty.
\)
\end{corollary}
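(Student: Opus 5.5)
The plan is to reduce everything to Corollary~\ref{cor:end_to_end_spectral_ppm}, applied to the size-\(r\) problem, through the exact correspondence of Proposition~\ref{prop:common_support_partial_permutations}. Concretely, I would view the observed matrix \(\mL\), whose blocks are \(r\times r\) because \(m_i=r\) for every \(i\), as the measurement matrix of a permutation-synchronization instance with ground truth \(\bvR\). By hypothesis it follows Definition~\ref{def:uniform_corruption_model} with permutation size \(r\). The reduced spectral initializer \(\widehat{\bvR}^{(0)}\) and the reduced iterates \(\widehat{\bvR}^{(k)}\) are then exactly the objects to which Corollary~\ref{cor:end_to_end_spectral_ppm} applies, with \(m\) replaced by \(r\) and \(r_n\) replaced by \(\eta_n\). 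Assumption~\ref{ass:uniform_corruption_scaling} with \(r\) in place of \(m\), together with \(np\ge C_0\log n\), are precisely that corollary's hypotheses. This yields \(\eta_n,\gamma_n,\xi_n\to0\), the bound \(\delta_{\mathrm{glob}}(\bvR,\widehat{\bvR}^{(0)})\le 0.5-\epsilon\), the trajectory recursion bound, and the exact-recovery bound, each holding with the stated probabilities.

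The only genuinely new step is a transfer identity between block errors: I would show
\[
\delta_{\mathrm{part}}(\bvP,\bvQ^{(k)})\le\delta_{\mathrm{glob}}(\bvR,\widehat{\bvR}^{(k)})
\]
for every \(k\), and in fact equality holds. For any \(\mH_r\in\Perm(r)\), extend it to \(\mH\in\Perm(m)\) that acts on \(S\) as \(\mE_S^\top\mH_r\mE_S\) and as the identity on \([m]\setminus S\). Then \(\mE_S\mH=\mH_r\mE_S\), so
\[
\mQ_i^{(k)}\mH=\widehat{\mR}_i^{(k)}\mH_r\mE_S .
\]
Because \(\mE_S\) has orthonormal rows (\(\mE_S\mE_S^\top=\mI_r\)), right multiplication by \(\mE_S\) is injective. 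Hence \(\mP_i=\mQ_i^{(k)}\mH\) if and only if \(\mR_i=\widehat{\mR}_i^{(k)}\mH_r\). Minimizing over \(\mH_r\) gives the inequality. Equality is not needed, but it follows because any \(\mH\) with \(\mP_i=\mQ_i^{(k)}\mH\) for some \(i\) must map \(S\) to \(S\). For the final step, exact recovery in the reduced problem (\(\delta_{\mathrm{glob}}=0\)) gives \(\delta_{\mathrm{part}}=0\) directly.

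Finally, I would substitute the inequality into each conclusion. On the event of the reduced corollary we get \(\delta_{\mathrm{part}}(\bvP,\bvQ^{(0)})\le0.5-\epsilon\). The recursion bounds require more care, because their right-hand sides involve \(\delta_{\mathrm{part}}(\bvP,\bvQ^{(0)})\) rather than \(\delta_{\mathrm{glob}}(\bvR,\widehat{\bvR}^{(0)})\). This is where I need equality and not just an inequality, and it is the point I expect to be the main subtlety. I would establish equality by noting that if \(\mP_i=\mQ_i^{(k)}\mH\) for at least one \(i\), then \(\mE_S\mH\) has rows supported on \(S\), so \(\mH\) restricts to a permutation of \(S\), and that restriction defines an \(\mH_r\) realizing the same agreement set. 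If no such \(i\) exists, the partial error equals \(1\), which is at least the reduced error. Either way, the minimizers match and the two errors coincide. With equality in hand, every bound from Corollary~\ref{cor:end_to_end_spectral_ppm} transfers verbatim, with the same probabilities and the same constants. The concluding limit statement under \(cnp\pi_0^2-\log n\to\infty\) is inherited unchanged.
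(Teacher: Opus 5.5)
Your proposal is correct and follows the paper's proof essentially step for step: you reduce to the size-$r$ end-to-end results and establish $\delta_{\mathrm{part}}(\bvP,\bvQ^{(k)})=\delta_{\mathrm{glob}}(\bvR,\widehat{\bvR}^{(k)})$ through the extension $\mE_S\mH=\mH_r\mE_S$ and the observation that any ambient alignment matching some block must preserve $S$. The only cosmetic difference is in the no-match case: the paper rules it out via $\delta_{\mathrm{glob}}\le 1-1/n$, whereas you note that a no-match alignment has partial error $1\ge\delta_{\mathrm{glob}}$; both arguments work, and (despite your earlier remark) you are right that equality is needed to transfer the bounds involving $\delta_{\mathrm{part}}(\bvP,\bvQ^{(0)})$.
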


\begin{proof}
For every \(\mK\in\Perm(r)\), there exists
\(\mH\in\Perm(m)\) such that
\(
\mE_S\mH=\mK\mE_S.
\)
Whenever this identity holds, for every \(j\in[n]\),
\(
\mP_j=\mQ_j^{(k)}\mH
\leftrightarrow
\mR_j=\widehat{\mR}_j^{(k)}\mK.
\)
Consequently,
\(
\delta_{\mathrm{part}}(\bvP,\bvQ^{(k)})
\le
\delta_{\mathrm{glob}}(\bvR,\widehat{\bvR}^{(k)}).
\)
Conversely, a reduced alignment can always be chosen to match at least one
block, and hence
\(
\delta_{\mathrm{glob}}(\bvR,\widehat{\bvR}^{(k)})\le1-1/n.
\)
Therefore, a minimizing ambient permutation in the definition of
\(\delta_{\mathrm{part}}\) must match at least one block. If
\(
\mP_i=\mQ_i^{(k)}\mH
\)
for some \(i\), then \(\mH\) preserves \(S\) setwise, so there is a
\(\mK\in\Perm(r)\) such that
\(
\mE_S\mH=\mK\mE_S.
\)
It follows that
\(
\delta_{\mathrm{glob}}(\bvR,\widehat{\bvR}^{(k)})
\le
\delta_{\mathrm{part}}(\bvP,\bvQ^{(k)}).
\)
Thus,
\(
\delta_{\mathrm{part}}(\bvP,\bvQ^{(k)})
=
\delta_{\mathrm{glob}}(\bvR,\widehat{\bvR}^{(k)}).
\)
The claimed conclusions follow from
Theorems~\ref{th:iter_contraction_random_observation} and~\ref{th:spectral_initialization} and Corollary~\ref{cor:iterative_exact_recovery}, with \(m\) replaced by \(r\).
Since the error is defined up to an ambient right permutation, the estimator
may lift the reduced iterates to any fixed \(r\)-element support; it need not
know the true set \(S\).
\end{proof}

\paragraph{Why varying supports require additional analysis}
The first obstruction is that, unlike for full permutations,
\(\mP_j^\top\mP_j=\mD_j\) is a diagonal visibility mask rather than the
identity, whereas \(\mP_j\mP_j^\top=\mI_{m_j}\) still holds.
Consequently, even at the ground truth the signal is coordinate-dependent:
the scalar signal count in the full-permutation analysis is replaced by a
coordinatewise co-visibility count. A basin-level counterpart of
Lemma~\ref{lem:signal_gap_random_observation} would instead require a signed
coordinatewise gap between correctly and incorrectly estimated visible
neighbors, together with uniform control of the remaining perturbation
terms.
For varying supports, let \(G_{ij}\in\{0,1\}\) indicate that 
\((i,j)\) is observed and uncorrupted, with \(G_{ii}=0\), and define
\(
\mH_i:=\sum_{j\ne i}G_{ij}\mD_j,
\)
\(
h_{ik}:=(\mH_i)_{kk},
\)
\(
\kappa_i:=\min_{k\in S_i}h_{ik}.
\)
At the ground truth, where \(\mQ_j=\mP_j\), the uncorrupted component of the block-\(i\) score is
\(
\sum_{j\ne i}G_{ij}\mP_i\mP_j^\top\mP_j
=
\mP_i\mH_i.
\)
Thus, \(h_{ik}\) is the number of observed uncorrupted neighbors of \(i\)
that also contain latent element \(k\).

\begin{lemma}[Deterministic co-visibility margin]
\label{lem:partial_cov_visibility_margin}
Fix \(i\), and suppose that a block score has the form
\(\mA_i=\mP_i\mH_i+\mE_i\). For
\(\mQ\in\operatorname{Inj}(m_i,m)\), define
\(
d_i(\mQ)
:=
\left|
\left\{
a\in[m_i]:(\mQ)_{a,:}\neq(\mP_i)_{a,:}
\right\}
\right|.
\)
Then
\(
\left\langle
\mA_i,\mP_i-\mQ
\right\rangle
\ge
\kappa_i d_i(\mQ)
-
\|\mE_i\|_F\sqrt{2d_i(\mQ)}.
\)
Consequently, if \(\kappa_i>0\) and
\(\|\mE_i\|_F<\kappa_i/\sqrt{2}\), then
\(\mP_i\) is the unique projection of \(\mA_i\) onto
\(\operatorname{Inj}(m_i,m)\).
\end{lemma}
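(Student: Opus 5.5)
The plan is to split the inner product into the clean term and the perturbation term,
\[
\langle\mA_i,\mP_i-\mQ\rangle=\langle\mP_i\mH_i,\mP_i-\mQ\rangle+\langle\mE_i,\mP_i-\mQ\rangle,
\]
and bound each piece separately. The perturbation is handled by Cauchy--Schwarz, and the clean term is handled by a row-by-row computation.

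\textbf{Perturbation term.} Let \(D\subseteq[m_i]\) be the set of rows \(a\) where \(\mQ_{a,:}\neq(\mP_i)_{a,:}\), so \(|D|=d_i(\mQ)\). Each row of \(\mP_i-\mQ\) indexed by \(a\in D\) has exactly one entry \(+1\) and one entry \(-1\). All other rows vanish. Hence \(\|\mP_i-\mQ\|_F^2=2d_i(\mQ)\), and Cauchy--Schwarz gives
\[
|\langle\mE_i,\mP_i-\mQ\rangle|\le\|\mE_i\|_F\sqrt{2d_i(\mQ)}.
\]

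\textbf{Clean term.} Write \(\pi(a)\in S_i\) for the column of the unique \(1\) in row \(a\) of \(\mP_i\), and \(\sigma(a)\) for the corresponding column of \(\mQ\). Since \(\mH_i\) is diagonal with nonnegative entries \(h_{ik}\), the matrix \(\mP_i\mH_i\) has row \(a\) equal to \(h_{i\pi(a)}e_{\pi(a)}^\top\). Therefore
\[
\langle\mP_i\mH_i,\mP_i-\mQ\rangle=\sum_{a\in D}\bigl(h_{i\pi(a)}-h_{i\pi(a)}\mathbf 1\{\sigma(a)=\pi(a)\}\bigr)=\sum_{a\in D}h_{i\pi(a)}.
\]
The indicator vanishes because \(\sigma(a)\neq\pi(a)\) for \(a\in D\). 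Since \(\pi(a)\in S_i\), each summand is at least \(\kappa_i\), so the clean term is at least \(\kappa_i d_i(\mQ)\). Combining the two bounds yields the claimed inequality.

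\textbf{Uniqueness.} For \(\mQ\neq\mP_i\) we have \(d:=d_i(\mQ)\ge1\). The lower bound is then \(\sqrt d\,(\kappa_i\sqrt d-\sqrt2\|\mE_i\|_F)\). This is positive as soon as \(\|\mE_i\|_F<\kappa_i\sqrt d/\sqrt2\), and the hypothesis \(\|\mE_i\|_F<\kappa_i/\sqrt2\) guarantees this for every \(d\ge1\). So \(\langle\mA_i,\mP_i\rangle>\langle\mA_i,\mQ\rangle\) for every \(\mQ\neq\mP_i\), i.e. \(\mP_i\) is the unique maximizer.

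There is no real obstacle here. The one point needing care is that \(d\) may equal \(1\), unlike the full-permutation case where two distinct permutations differ in at least two rows. A single row can differ when \(\mQ\) maps it to a column outside \(S_i\), which an injection allows. This is why the threshold is \(\kappa_i/\sqrt2\) and not the \(\kappa_i\) one would get from \(d\ge2\).
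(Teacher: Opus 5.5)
Your proof is correct and follows the paper's argument essentially verbatim: you evaluate \(\langle \mP_i\mH_i,\mP_i-\mQ\rangle\) row by row as a sum of co-visibility counts \(h_{i,k(a)}\ge\kappa_i\) over the rows where \(\mQ\) and \(\mP_i\) differ, use \(\|\mP_i-\mQ\|_F^2=2d_i(\mQ)\) with Cauchy--Schwarz for the perturbation, and conclude uniqueness from \(d_i(\mQ)\ge1\) for \(\mQ\neq\mP_i\). Your closing remark is a correct clarification that the paper leaves implicit: for injections \(d_i(\mQ)=1\) is possible, namely when a single row is moved to a column outside \(S_i\), and this is why the threshold is \(\kappa_i/\sqrt{2}\).
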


\begin{proof}
In row \(a\), the matrix \(\mP_i\mH_i\) has a single nonzero entry, in the
true column \(k(a)\), with value \(h_{i,k(a)}\). Therefore,
\[
\left\langle
\mP_i\mH_i,\mP_i-\mQ
\right\rangle
=
\sum_{\substack{a\in[m_i]\\
(\mQ)_{a,:}\neq(\mP_i)_{a,:}}}
h_{i,k(a)}
\ge
\kappa_i d_i(\mQ).
\]
Moreover,
\(\|\mP_i-\mQ\|_F^2=2d_i(\mQ)\). The claimed inequality follows from
Cauchy--Schwarz. Since \(d_i(\mQ)\ge1\) for every
\(\mQ\neq\mP_i\), the stated norm condition makes the projection gap
positive.
\end{proof}

For example, suppose that \(S_1,\ldots,S_n\) are independent and that
\(S_i\) is uniformly distributed over the \(m_i\)-element subsets of
\([m]\). Suppose also that
\(\{G_{ij}:1\le i<j\le n\}\) are mutually independent
\(\operatorname{Bernoulli}(\theta)\) variables, independent of the supports,
where \(\theta=p\pi_0\), and set \(G_{ji}:=G_{ij}\).

Conditional on \(k\in S_i\),
\(
h_{ik}
=
\sum_{j\ne i}G_{ij}\boldsymbol{1}_{\{k\in S_j\}}
\)
is a sum of independent Bernoulli variables with mean
\(
\mu_i
:=
\EE[h_{ik}\mid k\in S_i]
=
\frac{\theta}{m}\sum_{j\ne i}m_j.
\)
Let \(\mu_{\min}:=\min_{i\in[n]}\mu_i\). For every 
\(\eta\in(0,1)\), the multiplicative Chernoff bound and
\(\Prob(k\in S_i)=m_i/m\) give
\[
\Prob\left(
k\in S_i,\ 
h_{ik}<(1-\eta)\mu_i
\right)
\le
\frac{m_i}{m}
\exp\left(-\frac{\eta^2}{2}\mu_i\right).
\]
A union bound over \(i\in[n]\) and \(k\in[m]\) therefore gives
\[
\Prob\left(
\exists\,i\in[n],\ k\in S_i:
h_{ik}<(1-\eta)\mu_i
\right)
\le
\sum_{i=1}^n
m_i\exp\left(-\frac{\eta^2}{2}\mu_i\right) 
\le
M\exp\left(-\frac{\eta^2}{2}\mu_{\min}\right).
\]
Consequently, for every fixed \(\eta\in(0,1)\), with probability tending
to one,
\(
\kappa_i\ge(1-\eta)\mu_i
\)
simultaneously for every \(i\in[n]\), provided that
\(
{\log M}/{\mu_{\min}}\to0.
\)

Lemma~\ref{lem:partial_cov_visibility_margin} supplies a deterministic
recovery margin, and the preceding estimate controls that margin under an
independent visibility model. A full stochastic recovery theorem for
arbitrary varying supports would additionally require a specified corruption
law for rectangular blocks, an identifiability or connectivity condition, a
normalization of the masked signal, uniform bounds on the perturbations
\(\mE_i\), and a corresponding spectral-initialization result. The results above therefore transfer the full-permutation end-to-end theory
to the common-support case and provide a deterministic local projection
condition, together with a random-support lower bound for its margin, in the
varying-support case. They do not establish end-to-end recovery for general
varying supports.

\bibliographystyle{amsplain}
\bibliography{mgm}
\end{document}